\newtheorem{theorem}{Theorem}
\newtheorem{lemma}{Lemma}
\icmltitlerunning{SWALP: Stochastic Weight Averaging in Low Precision Training}
\begin{document}

\twocolumn[
\icmltitle{SWALP: Stochastic Weight Averaging in Low-Precision Training}

\begin{icmlauthorlist}
\icmlauthor{Guandao Yang}{cornell}
\icmlauthor{Tianyi Zhang}{cornell}
\icmlauthor{Polina Kirichenko}{cornell}
\icmlauthor{Junwen Bai}{cornell} \\
\icmlauthor{Andrew Gordon Wilson}{cornell}
\icmlauthor{Christopher De Sa}{cornell}

\end{icmlauthorlist}

\icmlaffiliation{cornell}{Cornell University}

\icmlcorrespondingauthor{Guandao Yang}{gy46@cornell.edu}
\icmlcorrespondingauthor{Tianyi Zhang}{tz58@cornell.edu}

\icmlkeywords{Machine Learning, Low precision training, ICML}
\vskip 0.3in
]

\printAffiliationsAndNotice{}
\newcommand{\todo}[1]{\textcolor{red}{todo:#1}}
\newcommand{\polinacomment}[1]{\xxcomment{cyan}{P}{K}{comment:#1}}
\newcommand{\gy}[1]{\xxcomment{blue}{G}{Y}{#1}}
\newcommand{\xxcomment}[4]{\textcolor{#1}{[$^{\textsc{#2}}_{\textsc{#3}}$ #4]}}
\newcommand{\polina}[1]{\xxcomment{green}{P}{K}{#1}}

%%%%%%%%%%%%%%%%%%%%%%%%%%%%%%%%%%%%%%%%%%%%%%%%%%%%%%%%%%%%%%%%%%%%%%%%%%%%%%%%%%%%%
% Abstract
%%%%%%%%%%%%%%%%%%%%%%%%%%%%%%%%%%%%%%%%%%%%%%%%%%%%%%%%%%%%%%%%%%%%%%%%%%%%%%%%%%%%%
\begin{abstract}
Low precision operations can provide scalability, memory savings, portability, and energy efficiency. 
This paper proposes SWALP, an approach to low precision training that averages low-precision SGD iterates with a modified learning rate schedule. 
SWALP is easy to implement and can match the performance of \emph{full-precision} SGD even with all numbers quantized down to 8 bits, including the gradient accumulators.
Additionally, we show that SWALP converges arbitrarily close to the optimal solution for quadratic objectives, and to a noise ball asymptotically smaller than low precision SGD in strongly convex settings.
\end{abstract}

%%%%%%%%%%%%%%%%%%%%%%%%%%%%%%%%%%%%%%%%%%%%%%%%%%%%%%%%%%%%%%%%%%%%%%%%%%%%%%%%%%%%%
% Introduction
%%%%%%%%%%%%%%%%%%%%%%%%%%%%%%%%%%%%%%%%%%%%%%%%%%%%%%%%%%%%%%%%%%%%%%%%%%%%%%%%%%%%%
\section{Introduction}
Training deep neural networks (DNNs) usually requires a large amount of computational time and power.
As model sizes grow larger, training DNNs more efficiently and with less memory becomes increasingly important.
This is especially the case when training on a special-purpose hardware accelerator; such ML accelerators are in development and used in industry~\cite{jouppi2017datacenter,projectbrainwave}.
Many ML accelerators have limited on-chip memory, and many ML applications are memory bounded \cite{jouppi2017datacenter}.
It is desirable to fit numbers that are frequently used during the training process into the on-chip memory.
One of the useful techniques for doing this is \emph{low-precision computation}, since using fewer bits to represent numbers reduces both memory consumption and computational cost.

Training a DNN in low-precision usually results in worse performance compared to training in full precision.
Many techniques have been proposed to reduce this performance gap~\cite{dorefa-net,WAGE,scalable8bittraining,8bitfloat}.
One useful method is to compute forward and backward propagation with low-precision weights and accumulate gradient information in higher precision gradient accumulators~\cite{binaryconnect, WAGE, 8bitfloat}. 
Recently, \citet{8bitfloat} showed that one could eliminate the performance gap between low-precision and high-precision training by quantizing all numbers except the gradient accumulator to 8 bits without changing the network structure, establishing the state-of-the-art result in low-precision training.
Since gradient accumulators are frequently updated during training, it would be desirable to also represent and store them in low-precision (e.g. 8 bits).
In this paper, we will focus on the setting where all numbers including the gradient accumulators are represented in low precision during training.

Independently from low-precision computation, \emph{stochastic weight averaging} (SWA) \citep{SWA} has been recently proposed for improved generalization in deep learning. 
SWA takes an average of SGD iterates with a modified learning rate schedule and has been shown to lead to wider optima \citep{SWA}.
\citet{large-batch} also connect the width of the optimum and generalization performance.
A wider optimum is especially relevant in the context of low-precision training as it is more likely to contain high-accuracy low-precision solutions.
\citet{SWA} also observed that SWA works well with a relatively high learning rate and can tolerate additional noise during training.
Low-precision training, on the other hand, produces extra quantization noise during training and tends to underperform when the learning rate is low. Moreover, by averaging, one can combine weights that have been rounded up with those that been rounded down during quantization.
For these reasons, we hypothesize that SWA can boost the performance of low-precision training and that performance improvement is more significant than in the case of SWA applied to full-precision training.

\begin{figure}[t]
    \centering
    \includegraphics[width=\linewidth]{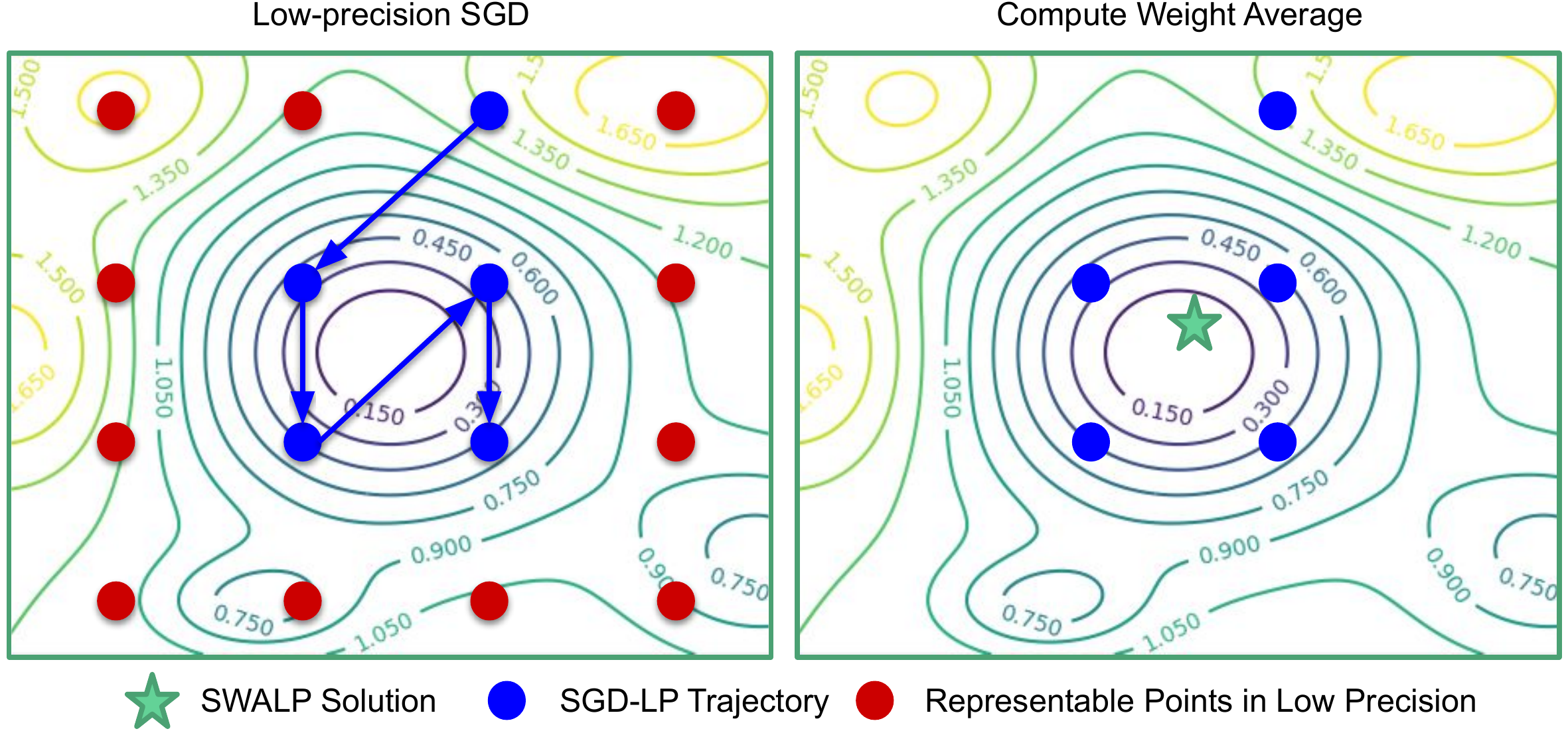}
    \caption{SWALP intuition. The trajectory of low-precision SGD, with a modified learning rate, over the training objective (with given contours), and the SWALP solution obtained by averaging. 
    }
    \label{fig:swalpexp}
\end{figure}

In this paper, we propose a principled approach using stochastic weight averaging while quantizing all numbers including the gradient accumulator and the velocity vectors during training. 
We prove that for quadratic objectives SWALP can converge to the optimal solution as well as to a smaller asymptotic noise ball than low-precision SGD for strongly convex objectives.
Figure~\ref{fig:swalpexp} illustrates the intuition behind SWALP. 
A quantized grid is only able to represent certain suboptimal solutions. By averaging we find centred solutions with better performance.
Empirically, we demonstrate that training with 8-bit SWALP can match the full precision SGD baseline in deep learning tasks such as training Preactivation ResNet-164~\cite{preact-resnet} on CIFAR-10 and CIFAR-100 datasets~\cite{CIFAR10}. 
In summary, our paper makes the following contributions:
\begin{itemize}
    \item We propose a principled approach to using stochastic weight averaging in low-precision training (SWALP) where all numbers including the gradient accumulators are quantized. Our method is simple to implement and has little computational overhead.
    \item We prove that SWALP can reach the optimal solution for quadratic objectives with no loss of accuracy from the quantization. For strongly convex objectives, we prove that SWALP converges to a noise ball that is asymptotically smaller than that of low-precision SGD.
    \item We show that our method can significantly reduce the performance gap between low-precision and full-precision training. Our experiment results show that 8-bit SWALP can match the full-precision SGD baseline on CIFAR-10 and CIFAR-100 with both VGG-16~\cite{VGG} and PreResNet-164. 
    \item We provide code at \url{https://github.com/stevenygd/SWALP}.
\end{itemize}
%%%%%%%%%%%%%%%%%%%%%%%%%%%%%%%%%%%%%%%%%%%%%%%%%%%%%%%%%%%%%%%%%%%%%%%%%%%%%%%%%%%%%
% Related Works
%%%%%%%%%%%%%%%%%%%%%%%%%%%%%%%%%%%%%%%%%%%%%%%%%%%%%%%%%%%%%%%%%%%%%%%%%%%%%%%%%%%%%
\section{Related Works}
\label{sec:related}

\paragraph{Low Precision Computation.} 
Many works in low precision computation focus on expediting inference and reducing model size. 
Some compress trained models into low precision~\cite{deepCompression}; others train models to produce low-precision weights for inference \cite{binary-net,ternary-quant,incremental-quant}. 
In contrast to works that focus on inference (test) time low-precision computation, our work focuses on low-precision training. 
Prior work on low precision training mostly explores two directions.
Some investigate different numerical representations and quantization methods~\cite{gupta2015deep,flexpoint,log-quant,dyanmic-fixed-point,8bitfloat}; others explore building specialized layers using low-precision arithmetic~\cite{WAGE,xor-net,dorefa-net,binaryconnect,scalable8bittraining}.
Our work is orthogonal to both directions since we improve on the learning algorithm itself.

\paragraph{Stochastic Weight Averaging.}
Inspired by the geometry of the loss function traversed by SGD with a modified learning rate schedule, \citet{SWA} proposed Stochastic Weight Averaging (SWA), which performs an equally weighted average of SGD iterates with cyclical or high constant learning rates. \citet{SWA} develop SWA for deep learning, showing improved generalization. While our work is inspired by \citet{SWA}, we focus on developing SWA for low-precision training.

\paragraph{Convergence Analysis.}
It is known that, due to quantization noise, low-precision SGD cannot necessarily produce solutions arbitrarily close to an optimum~\cite{training-quantized-network-deeper-understanding}.
A recently developed variant of low-precision SGD, HALP~\cite{HALP}, has the ability to produce such arbitrarily close solutions (for general convex objectives) by dynamically scaling its low-precision numbers and using variance reduction. %, which incurs some computational overhead.
We will show that SWALP can also achieve arbitrarily close-to-optimal solutions (albeit only for quadratic objectives), while being computationally simpler.
\citet{training-quantized-network-deeper-understanding} analyze low-precision SGD, and even provide a convergence bound for low precision SWA. However, they use SWA only as a theoretical condition, not as a suggested algorithm.
In contrast, we study SWA explicitly as a potential method that can improve low-precision training, and we use the averaging to improve the bound on the noise ball size.
QSGD~\cite{QSGD} studies the convergence properties of using low-precision numbers for communicating among parallel workers, and ZipML~\cite{ZipML} investigates the convergence properties of end-to-end quantization of the whole model. 
Our paper focuses on convergence properties of low-precision SWA, which we hope can be combined with these exciting prior works.

%%%%%%%%%%%%%%%%%%%%%%%%%%%%%%%%%%%%%%%%%%%%%%%%%%%%%%%%%%%%%%%%%%%%%%%%%%%%%%%%%%%%%
% Methods
%%%%%%%%%%%%%%%%%%%%%%%%%%%%%%%%%%%%%%%%%%%%%%%%%%%%%%%%%%%%%%%%%%%%%%%%%%%%%%%%%%%%%
\section{Methods}\label{sec:method}

\citet{SWA} show that SWA leads to a wider solution, works well with a high learning rate, and is robust toward training noise.
These properties make SWA a good fit for low precision training, since a wider optimum is more likely to capture a representable solution in low precision, and low-precision training suffers from low learning rate and quantization noise.
We will first introduce quantization methods make training low-precision (Sec~\ref{sec:method-quant}), then present SWALP algorithm in Sec~\ref{sec:method-algo} and Sec~\ref{sec:method-quant-all}.

%%%%%%%%%%%%%%%%%%%%%%%%%%%%%%%%%%% Quantization %%%%%%%%%%%%%%%%%%%%%%%%%%%%%%%%%%%%
\subsection{Quantization}\label{sec:method-quant}

In order to use low-precision numbers during training, we define a \emph{quantization function} $Q$, which rounds a real number to be stored in fewer bits.
In this paper, we use \emph{fixed-point quantization} with \emph{stochastic rounding} to demonstrate the algorithm and analyze its convergence properties.
Recently, many sophisticated quantization methods have been proposed and studied \cite{log-quant,flexpoint,dyanmic-fixed-point,low-precision-multiply}.
We will use \emph{block floating point}~\cite{error-analysis} in our deep learning experiments (Sec~\ref{sec:expr}).

\textbf{Fixed Point Quantization.}
In stochastic rounding, numbers are rounded up or down at random such that $\mathbb{E}[Q(w)] = w$ for all $w$ that will not cause overflow.
Explicitly, suppose we allocate $W$ bits to represent the quantized number and allocate $F$ of the $W$ bits to represent the fractional part of the number.
The \emph{quantization gap} $\delta=2^{-F}$ represents the distance between successive representable fixed-point numbers.
The upper limit of the representable numbers is $u=2^{W-F-1}-2^{-F}$ and the lower limit is $l=-2^{W-F-1}$.
We write the quantization function as $Q_\delta: \mathbb{R} \rightarrow \mathbb{R}$ such that
\begin{align}
    Q_\delta(w) = \textstyle\begin{cases}
    \operatorname{clip}(\delta \left \lfloor \frac{w}{\delta} \right \rfloor, l,u) & \text{w.p. } \left \lceil \frac{w}{\delta} \right \rceil - \frac{w}{\delta} \\ 
    \operatorname{clip}(\delta \left \lceil \frac{w}{\delta} \right \rceil, l,u) & \text{w.p. } 1-(\left \lceil \frac{w}{\delta} \right \rceil - \frac{w}{\delta})
    \end{cases}\label{eq:fixed-point-SR}
\end{align}
where $\operatorname{clip}(x,l,u) = \max(\min(x,u), l)$.
This quantization method has been shown to be useful for theory~\cite{training-quantized-network-deeper-understanding} and has been observed to perform empirically better than quantization methods without stochastic rounding~\cite{gupta2015deep}.

\textbf{Block Floating Point (BFP) Quantization.}
Floating-point numbers have individual exponents, and fixed-point numbers all share the same fixed exponent.
For block floating-point numbers, all numbers within a block share the same exponent, which is allowed to vary like a floating-point exponent.
Suppose we allocate $W$ bits to represent each number in the block and $F$ bits to represent the shared exponent.
The shared exponent $E(\mathbf{w})$ for a block of numbers $\mathbf{w}$ is usually set to be the largest exponent in a block to avoid overflow \cite{error-analysis,Mixed-precision-training}. 
In our experiments, we simulated block floating point numbers by using the following formula to compute the shared exponent:
\begin{align*}
{\textstyle
E(\mathbf{w}) = \operatorname{clip}(\left \lfloor \log_2(\max_i|\mathbf{w}_i|)\right \rfloor, -2^{F-1}, 2^{F-1}-1)    
}
\end{align*}
We then apply equation~(\ref{eq:fixed-point-SR}) with $\delta$ replaced by $2^{-E(\mathbf{w}) + W - 2}$ to quantize all numbers in $\mathbf{w}$.

For deep learning experiments, BFP is preferred over fixed-point because BFP usually has less quantization error caused by overflow and underflow when quantizing DNN models \cite{error-analysis}.
We will discuss how to design appropriate blocks in Sec~\ref{sec:expr}, and show that choosing appropriate block design can result in better performance.

%%%%%%%%%%%%%%%%%%%%%%%%%%%%%%%%%%%% Algorithm %%%%%%%%%%%%%%%%%%%%%%%%%%%%%%%%%%%%%%
\subsection{Algorithm}\label{sec:method-algo}

\begin{algorithm}[t]
  \caption{SWALP}
  \label{alg:SWALP}
\begin{algorithmic}
\REQUIRE{
    Initial after-warm-up weight $w_0$; learning rate $\alpha$; total number of iterations $T$; cycle length $c$;
    random gradient samples $\nabla \tilde{f}(w_t)$;
    quantization function $Q$.
}
\STATE $\bar{w}_0 \leftarrow w_0$ \COMMENT{Accumulator for SWA (high precision)}
\STATE $m \leftarrow 1$ \COMMENT{Number of models that have been averaged}

\FOR{$t = 1, 2, \dots, T$ do} 
    \STATE {$w_t \leftarrow Q(w_{t-1} - \alpha \nabla \tilde{f}_t(w_{t-1}))$ \COMMENT{Training with weight quantization; $w_t$ is stored in low precision}}
    \IF{$t \equiv 0 \pmod{c}$}
        \STATE {$\bar{w}_m \leftarrow (\bar{w}_{m-1} \cdot m + w_t)/(m + 1)$  \COMMENT{Update model with weight averaging in high precision}}
        \STATE {$m \leftarrow m + 1$} \COMMENT{Increment model count}
    \ENDIF\ENDFOR
\STATE \textbf{return} $\bar{w}$
\end{algorithmic}
\end{algorithm}

In the \emph{warm-up phase}, we first run regular low-precision SGD to produce a pre-trained model $w_0$.
SWALP then continues to run low-precision SGD, while periodically averaging the current model weight into an accumulator $\bar{w}$, which SWALP eventually returns.
A detailed description is presented in Algorithm~\ref{alg:SWALP}.
SWALP is substantially similar to the algorithm in \citet{SWA}, except that we use a constant learning rate and low-precision SGD.
The convergence analysis in Sec~\ref{sec:convergence} will be all based on Algorithm~\ref{alg:SWALP}.

State-of-the-art low-precision training approaches usually separate weights from gradient accumulators~\cite{binaryconnect, dorefa-net, WAGE, 8bitfloat}.
Expensive computations in forward and backward propagation are done with the low-precision weights (e.g., 8 bits), while the gradient information is accumulated onto a higher precision copy of the weights (e.g. 16 bits).
Formally, the updating step with higher precision gradient accumulator can be written as $w_{t+1} = w_{t} - \alpha \nabla\tilde{f}_t(Q(w_t))$, where $w_{t}$ is the gradient accumulator and $Q(w_t)$ is the weight.
However, such an approach needs to store the high precision accumulators in low-latency memory (e.g. on-chip when running on an accelerator), which limits the memory efficiency.
SWALP quantizes the gradient accumulator so that we only need to store the low-precision model in low-latency memory during training. Meanwhile, the averaged model is accessed infrequently so that it can be stored in higher-latency memory (e.g. off-chip when running on an accelerator). 

Note that in Algorithm~\ref{alg:SWALP}, we only quantize the gradient accumulator while leaving the quantization of the gradient, the layer activations, and back-propagation signals in full precision.
In practice, however, it is desirable to quantize all above mentioned numbers.
We make this simplification for the convenience of the theoretical analysis in Sec~\ref{sec:convergence}, and following previous theoretical works in this space~\cite{training-quantized-network-deeper-understanding,HALP}.
We will discuss how to quantize other numbers during training in the next section.

%%%%%%%%%%%%%%%%%%%%%%%%%%%%%%%% Quantizing Other Numbers %%%%%%%%%%%%%%%%%%%%%%%%%%%%%%%%
\subsection{Quantizing Other Numbers}\label{sec:method-quant-all}

\begin{algorithm}[t]
  \caption{SWALP with all numbers quantized.}
  \label{alg:SWALP-all}
\begin{algorithmic}
\REQUIRE{
    $L$ layers DNN $\{f_1, \dots, f_L\}$;
    Scheduled learning rate $\alpha_t$; Momentum $\rho$;
    Initial weights $w_0^{(i)}, \forall l \in [1,L]$;
    Total iterations $T$;
    Warm-up iterations $S$; 
    Cycle length $c$;
    Quantization functions $Q_W$, $Q_A$, $Q_G$, $Q_E$, and $Q_M$; 
    Loss function $\mathcal{L}$;
    Data batch sequence $\{(x_i, y_i)\}_{i=1}^T$.
}
\STATE {$\bar{w}_0^{(l)} \leftarrow 0$, $\forall l \in [1,L]$; $m \leftarrow 0$}
\FOR{$t = 1, 2, \dots, T$} 
    \STATE{\textbf{1. Forward Propagation:}}
    \STATE{$a_t^{(0)} = x_i$}
    \STATE{$a_t^{(l)} = Q_A(f_l(a_t^{(l-1)}, w_t^{(l)}))$, $\forall l \in [1, L]$}

    \STATE{\textbf{2. Backward Propagation:}}
    \STATE{$e_t^{(L)} = \nabla_{a_t^{(L)}}\mathcal{L}(a_t^{(L)},y_t)$}
    \STATE{$e_t^{(l-1)} = Q_E(
    \frac{\partial f_l(a_t^{(l)})}{\partial a_t^{(l-1)}}
    e_t^{(l)}
    )$, $\forall l \in [1, L]$}
    \STATE{$g_t^{(l)} = Q_G(
        \frac{\partial f_l}{\partial w_t^{(l)}}
        e_t^{(l)}
    )$, $\forall l \in [1,L]$}
    
    \STATE{\textbf{3. Low Precision SGD Update (with momentum):}}
    \STATE {$v_t^{(l)} \leftarrow \rho Q_M(v_{t-1}^{(l)}) + g_t^{(l)}$, $\forall l \in [1,L]$}
    \STATE {$w_t^{(l)} \leftarrow Q_W(w_{t-1} - \alpha_t v_t^{(l)})$, $\forall l\in [1,L]$ }

    \STATE{\textbf{4. High Precision SWA Update:}}
    \IF{$t > S$ and $(t-S) \equiv 0 \pmod{c}$}
        \STATE {$\bar{w}_m^{(l)} \leftarrow (\bar{w}^{(l)}_{m-1} \cdot m + w^{(l)}_t)/(m + 1)$, $\forall l\in [1,L]$}
        \STATE {$m \leftarrow m + 1$}
    \ENDIF\ENDFOR
\STATE \textbf{return} $\bar{w}$
\end{algorithmic}
\end{algorithm}

In order to train DNNs with low-precision storage, we need to also quantize other numbers during training.
We follow prior convention to quantize the weights, activations, back-propagation errors, and gradients signals \cite{WAGE, 8bitfloat}.
Since we quantized the gradient accumulators $w_t$ into low-precision, there is no need to differentiate them from model weights.
To use momentum during training, we need to store the velocity tensors in low precision, so we modified the SGD update as follows:
\begin{align*}{\textstyle
    v_t &= \rho \cdot Q_M(v_{t-1}) + Q_G(\nabla \tilde{f}_t(w_{t-1})) \\
    w_t &= Q_W(w_{t-1} - \alpha \cdot v_t) 
}\end{align*}
where $Q_M$, $Q_G$, and $Q_W$ are quantizers for momentum, gradients, and weights respectively. 
For simplicity, we set $Q_M = Q_G$ (i.e. both quantized to 8 bits). 
We describe the details in Algorithm~\ref{alg:SWALP-all}.
Our deep learning experiments will use Algorithm~\ref{alg:SWALP-all} unless specified otherwise.

Although SWA adds minor computation and memory overhead by averaging weights, the fact that averaging could be done infrequently (i.e. once per epoch) and that the weight communication is one way (i.e. from accelerator to host) makes it possible to separate the low-precision training workload from the model averaging workload.
We could use hardware specialized for low-precision training to accelerate the SGD and to occasionally ship the weights in low precision to a separate device that computes weight averaging. 
For instance, one could train the model in low precision on a GPU, and the averaging could be computed on a CPU once per epoch. 
However, in this paper, we will focus on the statistical properties of SWALP and will leave the hardware discussion to future work.
Though the averaging workload (i.e. Step(4) in algorithm~\ref{alg:SWALP-all}) is typically done in higher precision, we empirically show in Sec~\ref{sec:expr-perf} that SWALP can achieve comparable performance when the averaging is performed with low-precision computation.

%%%%%%%%%%%%%%%%%%%%%%%%%%%%%%%%%%%%%%%%%%%%%%%%%%%%%%%%%%%%%%%%%%%%%%%%%%%%%%%%%%%%%
% Convergence Analysis
%%%%%%%%%%%%%%%%%%%%%%%%%%%%%%%%%%%%%%%%%%%%%%%%%%%%%%%%%%%%%%%%%%%%%%%%%%%%%%%%%%%%%
\section{Convergence Analysis}\label{sec:convergence} % short section

In this section, we analyze the convergence of SWALP theoretically and compare it to SGD-LP.
Specifically, we first prove that SWALP can pierce the quantization noise ball of SGD-LP and can converge to the optimal solution for quadratic objectives (Sec~\ref{sec:convergence-quadratic}).
Then, we generalize this theory to strongly convex objectives (Sec~\ref{sec:convergence-strongly-convex}) where we show that SWALP converges to a noise ball with better asymptotic dependency on the number of bits compared to SGD-LP.
These results are empirically verified in Sec~\ref{sec:convergence-verify}.

%%%%%%%%%%%%%%%%%%%%% Convergence of SWALP for Quadratic Objectives %%%%%%%%%%%%%%%%%%%%%
\subsection{Convergence of SWALP for Quadratic Objectives}\label{sec:convergence-quadratic}
% linear regression (figure)

It is known that conventional low-precision SGD cannot obtain arbitrarily accurate solutions to optimization problems since it can only represent so much.
If the optimal parameter is not one of the representable low-precision numbers, then the best SGD-LP can possibly do is to output the closest representable number -- and it is not even guaranteed to do this.
One recent algorithm, HALP, circumvents this problem by dynamically re-centering and re-scaling the representable numbers to produce arbitrarily accurate solutions with low precision iterates~\cite{HALP}.
In this subsection, we will demonstrate that SWALP can also achieve this property for quadratic objectives with a simple training procedure. 
A detailed proof is included in the appendix.

Consider the quadratic objective function
$f(w) = \frac{1}{2}(w - w^*)^T A (w - w^*)$
for some symmetric matrix $A \in \mathbb{R}^{d \times d}$ and optimal solution $w^* \in \mathbb{R}^d$.
Assume $A \succeq \mu I$ for some constant $\mu > 0$, the strong convexity parameter of this function.
Suppose that we want to minimize $f$ using SWALP with gradient samples $\nabla \tilde f(w)$ that satisfy $\mathbb{E}[\nabla \tilde f(w)] = \nabla f(w) = A (w - w^*)$.
Suppose that the variance of these samples always satisfies $\mathbb{E}[\| \nabla \tilde f(w) - \nabla f(w) \|_2^2] \le \sigma^2$ for some constant $\sigma > 0$; this is a standard assumption used in the analysis of SGD.
Then we can prove the following. 

\begin{theorem}\label{thm:quadratic}
Suppose we run SWALP under the above assumptions with cycle length $c$ and $0 < \alpha < \frac{1}{2}\|A\|_2$.
The expected squared distance to the optimum of SWALP's output $\bar{w}$ is bounded by
\begin{align*}
{\textstyle
\mathbb{E}\left[\|\bar{w} - w^*\|^2 \right]
\leq
\frac{\|w_0 - w^*\|^2}{\alpha^2\mu^2T^2} + 
\frac{c(\alpha^2\sigma^2 + \frac{\delta^2d}{4})}{\alpha^2\mu^2T}
}.\end{align*}
\end{theorem}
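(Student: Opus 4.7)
The plan is to exploit the fact that in the quadratic setting both the stochastic gradient and stochastic rounding are unbiased, so SWALP's update becomes an unbiased linear recurrence. Letting $u_t = w_t - w^*$ and $y_t = w_{t-1} - \alpha \nabla \tilde f_t(w_{t-1})$, I would write $u_t = (I - \alpha A)\, u_{t-1} + \xi_t$, where $\xi_t := -\alpha(\nabla \tilde f_t(w_{t-1}) - A u_{t-1}) + (Q(y_t) - y_t)$ splits into a gradient-noise term (mean zero conditional on $\mathcal{F}_{t-1}$, second moment at most $\alpha^2 \sigma^2$) and a quantization-noise term (mean zero conditional on $y_t$, per-coordinate variance $\le \delta^2/4$, hence total second moment $\le \delta^2 d/4$). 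Because the quantization noise is conditionally independent of the gradient noise at the \emph{same} step, the two are uncorrelated and their variances add, giving $\mathbb{E}[\|\xi_t\|^2 \mid \mathcal{F}_{t-1}] \le \alpha^2 \sigma^2 + \delta^2 d/4$; in particular $\{\xi_t\}$ is a martingale difference sequence.

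Unrolling yields $u_{kc} = (I - \alpha A)^{kc} u_0 + \sum_{s=1}^{kc} (I - \alpha A)^{kc - s}\,\xi_s$. Writing $\bar u = \bar w - w^* = (1/n)\sum_{k=1}^n u_{kc}$ with $n = T/c$ (the $w_0$ boundary term in Algorithm~\ref{alg:SWALP} is absorbable into constants), I would decompose $\mathbb{E}\|\bar u\|^2 = \|\mathbb{E}\bar u\|^2 + \mathbb{E}\|\bar u - \mathbb{E}\bar u\|^2$ and diagonalize $A = U \Lambda U^{\top}$ to reduce both terms to scalar geometric sums over eigenvalues $\lambda_i \in [\mu, \|A\|_2]$. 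The step-size condition $\alpha < 1/(2\|A\|_2)$ keeps each $\rho_i := 1 - \alpha \lambda_i$ in $(1/2, 1)$, so every geometric series converges. For the bias, the $i$-th coordinate of $\mathbb{E}\bar u$ is $(u_{0,i}/n)\,\rho_i^{c}(1 - \rho_i^{nc})/(1 - \rho_i^{c})$, and the key elementary inequality $1 - (1-x)^c \ge \tfrac12 \min(cx, 1)$ for $x \in (0, 1/2]$ (proved by Bernoulli plus a case split on $cx$) gives $1 - \rho_i^{c} \ge c\alpha\mu/2$ in the regime that governs the leading behaviour; summing over $i$ then delivers $\|\mathbb{E}\bar u\|^2 \le \|w_0 - w^*\|^2/(\alpha^2 \mu^2 T^2)$.

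The variance term is the delicate part. Swapping sums writes $\bar u - \mathbb{E}\bar u = (1/n)\sum_{s=1}^T M_s\,\xi_s$ with $M_s = \sum_{k : kc \ge s} (I - \alpha A)^{kc - s}$, so martingale orthogonality gives $\mathbb{E}\|\bar u - \mathbb{E}\bar u\|^2 = (1/n^2)\sum_s \mathbb{E}\|M_s \xi_s\|^2 \le (\alpha^2 \sigma^2 + \delta^2 d/4)(1/n^2)\sum_s \|M_s\|^2$. Grouping the $T$ indices $s$ into the $n$ blocks of $c$ consecutive steps between consecutive sample times factors out a common geometric sum inside each block, yielding an estimate of the form $\sum_s \|M_s\|^2 \le 2n/((1 - (1-\alpha\mu)^c)(1 - (1-\alpha\mu)^2))$ in the eigenbasis; then $1 - (1-\alpha\mu)^c \ge c\alpha\mu/2$ again, together with $1 - (1-\alpha\mu)^2 \ge \alpha\mu$, produces the claimed $c(\alpha^2\sigma^2 + \delta^2 d/4)/(\alpha^2 \mu^2 T)$ bound. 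The main obstacle throughout is obtaining the $1/(\alpha\mu)^2$ dependence in the variance rather than the naive $1/(\alpha\mu)$ that a stationary-variance bound on a single iterate would give: this is the Polyak--Ruppert ``gain from averaging'', and it hinges on the $1 - (1-x)^c \ge cx/2$ inequality combined with careful bookkeeping of the contributions from intermediate iterates strictly between the sample times.
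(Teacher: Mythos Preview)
Your overall architecture is exactly that of the paper: write the low-precision iterate as a linear recurrence $w_t - w^* = (I-\alpha A)(w_{t-1}-w^*) + \zeta_t$ with $\zeta_t$ a martingale difference combining gradient noise ($\le \alpha^2\sigma^2$) and quantization noise ($\le \delta^2 d/4$), unroll, form $\bar w - w^*$, and split into a deterministic bias term plus a martingale-difference variance term whose cross terms vanish.

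The one place you diverge is in bounding the partial geometric sums $M_s=\sum_{k:\,kc\ge s}(I-\alpha A)^{kc-s}$ (and the analogous bias sum). You diagonalise, compute the closed form $\rho^{c-r}(1-\rho^{Nc})/(1-\rho^{c})$, and then invoke $1-(1-x)^c\ge \tfrac12\min(cx,1)$, which forces a case split on whether $c\alpha\mu\le 1$ and introduces stray factors of $2$ or $4$ that prevent you from landing on the stated constants exactly. The paper avoids all of this with a one-line observation: since $0<\alpha<1/(2\|A\|_2)$ ensures every eigenvalue of $I-\alpha A$ lies in $(0,1)$, any sum of \emph{distinct} nonnegative powers of $I-\alpha A$ is dominated in operator norm by the full series $\sum_{j\ge 0}(I-\alpha A)^j=(\alpha A)^{-1}$, giving $\|M_s\|_2\le 1/(\alpha\mu)$ immediately and uniformly in $s$ and $c$. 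The same trick bounds the bias sum $\tfrac1K\sum_{t=1}^K (I-\alpha A)^{ct}$. This yields the exact constants of the theorem with no regime analysis; your route is correct but works harder for a bound that, as written, is only within constant factors of the target.
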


Theorem~\ref{thm:quadratic} shows that SWALP will converge to the optimal solution at a $O(1/T)$ rate. 
Since $\mathbb{E}[\|\bar{w}_T - w^*\|_2^2]$ converges to $0$ regardless of what $\delta$ is, we can get an arbitrarily precise solution no matter what numerical precision we use for quantization is, as long as we train for enough iterations.
This result is surprising since SWALP has the same $O(1/T)$ asymptotic convergence rate as full precision SGD, even though SWALP cannot even evaluate gradient samples at points that are arbitrarily close to the optimal solution.

\begin{figure*}[t]
    \centering
    \includegraphics[width=0.9\textwidth]{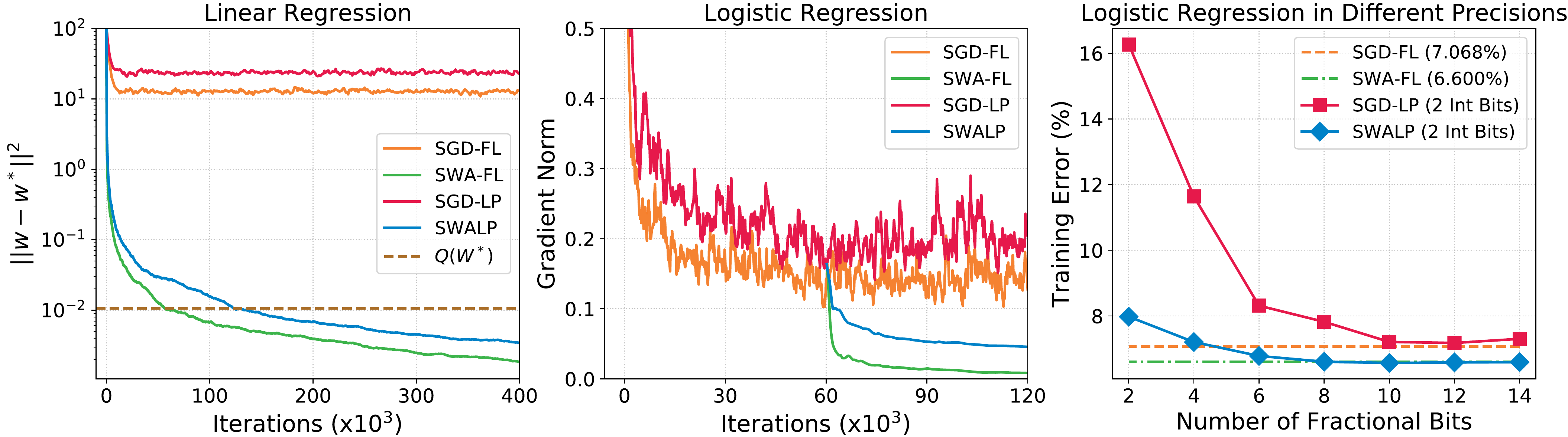}
    \caption{Empirical verification of two theorems with linear regression and logistic regression. (Left) SWALP converges below the quantization noise and to the optimal solution in linear regression; (Middle) SWALP can converge to a smaller noise ball than SGD-LP and SGD; (Right) SWALP requires less than half of the float bits to achieve the same performance compared to SGD-LP.
    }
    \label{fig:convergence}
\end{figure*}

\subsection{Convergence of SWALP for General Strongly Convex Objectives}\label{sec:convergence-strongly-convex}

To generalize Theorem~\ref{thm:quadratic} from quadratic settings to strongly convex settings, we want to construct a bound that is tight with our bound in the quadratic case: one that depends on how much the objective function differs from a quadratic.
A natural way to measure this is the Lipschitz constant $M$ of the second derivative of $f$, which is defined such that
\begin{align*}
{\textstyle
\forall x, y \in \mathbb{R}^d, \hspace{1em} \| \nabla^2 f(x) - \nabla^2 f(y) \|_2 \le M \| x - y \|_2
}
\end{align*}
where $\| \cdot \|_2$ denotes the matrix induced 2-norm, and $M = 0$ only if $f$ is a quadratic function.

We prove our result in two steps.
First, we bound the trajectory of low precision SGD within some distance of $w^*$ (a noise ball).
Then, we leverage a method similar to the proof of Theorem~\ref{thm:quadratic} to analyze the dynamics of SWALP, keeping track of the effect caused by the function not being quadratic as an extra error term that we bound in terms of $M$.
We give a tight bound that converges with an $O(1/T)$ rate to a noise ball with squared error proportional to $M^2$.

Let $f(w)$ be a function that is strongly convex with parameter $\mu$, Lipschitz continuous with parameter $L$, and has global minimum $w^*$.
Assume that we run SWALP with gradient samples $\nabla \tilde f_t$ that satisfy $\mathbb{E}[\nabla \tilde f(w)] = \nabla f(w)$.
Suppose the distance from the gradient samples to the actual gradient is bounded by some constant $G$ such that $\|\nabla\tilde{f}_t(w) - \nabla f(w)\|\leq G$ for all points $w$ that may appear in the course of execution.
Similar to Sec~\ref{sec:convergence-quadratic}, we assume that no overflow happens during quantization.

\begin{lemma}
\label{lemmaNoiseBall}
Under the above conditions, 
suppose that we run low-precision SGD with step size $\alpha = \sqrt{\frac{\delta^2 d}{G^2}}$. 
Assume $\delta$ is small enough so that it satisfies $(1-2\alpha\mu+\alpha^2L)^2 \leq 1-2\alpha\mu$ and $\alpha\mu < 1$.
If we run for $T$ iterations such that   
$ T \geq \frac{2G}{\mu\delta\sqrt{d}}\log{\left( \frac{\mu\|w_0 - w^*\|^2}{44G\delta\sqrt{d}} \right)}$, 
then for some fixed constant $\chi$ that is independent of dimension and problem parameters, 
\begin{align*}
{\textstyle
\mathbb{E}[\|w_T - w^*\|^4] \leq \frac{\chi^2 G^2\delta^2d}{\mu^2}.
}
\end{align*}
\end{lemma}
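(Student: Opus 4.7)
The plan is to control the fourth moment $V_t := \mathbb{E}[\|w_t-w^*\|^4]$ by a geometric recursion that contracts toward a noise ball of the stated size $O(G^2\delta^2 d/\mu^2)$, and then to choose $T$ large enough that the initial transient falls below it. I would write one low-precision SGD step as $w_{t+1} = u_{t+1} + \xi_{t+1}$, where $u_{t+1} := w_t - \alpha\nabla\tilde f_t(w_t)$ is the pre-quantization iterate and $\xi_{t+1} := Q(u_{t+1}) - u_{t+1}$ is the stochastic-rounding noise; the latter is coordinate-wise independent, zero-mean conditional on $u_{t+1}$, and satisfies $\mathbb{E}[\|\xi_{t+1}\|^2 \mid u_{t+1}] \leq \delta^2 d/4$.

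Step 1 reproduces the standard one-step second-moment bound
\begin{equation*}
\mathbb{E}[\|w_{t+1}-w^*\|^2 \mid w_t] \leq (1-2\alpha\mu+\alpha^2 L)\|w_t-w^*\|^2 + \alpha^2 G^2 + \tfrac{\delta^2 d}{4}
\end{equation*}
from $\mu$-strong convexity, $L$-smoothness and $\|\nabla\tilde f_t(w)-\nabla f(w)\|\leq G$. Step 2 lifts this to the fourth moment: squaring $\|w_{t+1}-w^*\|^2 = \|u_{t+1}-w^*\|^2 + 2(u_{t+1}-w^*)^T\xi_{t+1} + \|\xi_{t+1}\|^2$ and taking conditional expectation kills the linear-in-$\xi$ term and leaves a leading $(1-2\alpha\mu+\alpha^2 L)^2\|w_t-w^*\|^4$, cross terms of the form $C\|w_t-w^*\|^2$ with $C = O(\alpha^2 G^2+\delta^2 d)$, and a pure-noise remainder of order $(\alpha^2 G^2+\delta^2 d)^2$. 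Applying Young's inequality to the cross terms with weight $\alpha\mu$ routes an $\alpha\mu\|w_t-w^*\|^4$ share back into the leading coefficient, and combined with the stated hypothesis $(1-2\alpha\mu+\alpha^2 L)^2 \leq 1-2\alpha\mu$ yields a clean recursion $V_{t+1} \leq (1-\alpha\mu)V_t + K$ with $K = O\!\left((\alpha^2 G^2+\delta^2 d)^2/(\alpha\mu)\right)$. Step 3 unrolls this to $V_T \leq (1-\alpha\mu)^T V_0 + K/(\alpha\mu)$. Substituting $\alpha = \delta\sqrt{d}/G$ collapses $\alpha^2 G^2$ and $\delta^2 d$ to the same scale, and together with $\alpha\mu < 1$ gives $K/(\alpha\mu) \leq \tfrac{1}{2}\chi^2 G^2\delta^2 d/\mu^2$ for an absolute $\chi$; the prescribed $T \geq (2G/(\mu\delta\sqrt d))\log(\mu\|w_0-w^*\|^2/(44G\delta\sqrt d))$ is exactly what makes the transient $(1-\alpha\mu)^T V_0 \leq e^{-\alpha\mu T}\|w_0-w^*\|^4$ drop below the remaining half of the target.

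The hard part will be Step 2. The mixed quantization term $\mathbb{E}[(u_{t+1}-w^*)^T\xi_{t+1}\,\|\xi_{t+1}\|^2]$ and the odd-power cross product in the expansion of $\|u_{t+1}-w^*\|^4 = \|w_t - w^* - \alpha\nabla f(w_t) - \alpha\nu_t\|^4$, with $\nu_t := \nabla\tilde f_t(w_t)-\nabla f(w_t)$, each leave a $\|w_t-w^*\|$ or $\|w_t-w^*\|^3$ factor after Cauchy--Schwarz. These must be offloaded into the linear part of the recursion through Young's inequality without overspending the $1-2\alpha\mu$ slack granted by the hypothesis, and all residual constants have to be independent of $d$, $G$, $\mu$, $L$, $\delta$ so that they collapse into the single dimension-free $\chi$. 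Carefully tracking the eight expectation terms produced by squaring and tuning the Young weights so that the leading coefficient stays at exactly $1-\alpha\mu$ is where essentially all of the technical bookkeeping lives.
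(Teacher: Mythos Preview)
Your proposal is correct and would prove the lemma, but the paper takes a somewhat different route. Rather than separating the quantization noise $\xi_{t+1}$ from the gradient noise $-\alpha\nu_t$ and linearizing the resulting cross terms one by one via Young's inequality, the paper lumps both into a \emph{single} perturbation $u_t$ with the almost-sure bound $\|u_t\|\le C:=\delta\sqrt d+\alpha G$, expands $\mathbb{E}\bigl[\|(w_t-w^*-\alpha\nabla f(w_t))+u_t\|^4\bigr]$, and controls the three mixed terms with H\"older's inequality $\mathbb{E}[X^mY^n]\le(\mathbb{E}|X|^{m+n})^{m/(m+n)}(\mathbb{E}|Y|^{m+n})^{n/(m+n)}$. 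This produces the \emph{nonlinear} recursion
\[
V_{t+1}\le(1-2\alpha\mu)\,V_t+6C^2V_t^{1/2}+4C^3V_t^{1/4}+C^4,
\]
which the paper then handles by a threshold argument: whenever $V_t\ge(11C^2/(\alpha\mu))^2$ the lower-order terms are dominated by $\alpha\mu V_t$ and one recovers $V_{t+1}\le(1-\alpha\mu)V_t$; otherwise $V_t$ is already inside the noise ball. Your approach buys a genuinely linear recursion that unrolls without any case split, at the price of the eight-term bookkeeping you correctly flag as the hard part (including the dependence of $\xi$ on $\nu_t$ through $u_{t+1}$, which prevents some ``obvious'' cancellations); the paper's H\"older-plus-threshold route avoids that bookkeeping entirely---only three cross terms survive---but has to reason about a recursion with fractional powers of $V_t$. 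Either path lands on a noise ball of size $O(G^2\delta^2 d/\mu^2)$ once $\alpha=\delta\sqrt d/G$ is substituted.
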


\begin{theorem}
\label{thm:SWALP}
Suppose that we run SWALP under the above conditions, with the parameters specified in Lemma~\ref{lemmaNoiseBall}.
Also, suppose that we first run a warm-up phase and start averaging at some point $w_0$ after enough iterations of low-precision SGD such that the bound of Lemma~\ref{lemmaNoiseBall} is already guaranteed to apply for this and all subsequent iterates.
Let $\bar{w}$ be the output of SWALP using cycle length $c$, and $\gamma = min(\alpha^2\mu^2c^2,1)$.
The expected squared distance to the optimum of the output of SWALP is bounded by
\begin{align*}
{\textstyle
    \mathbb{E}[\|\bar{w} - w^*\|^2] \leq
    \frac{3 \chi^2 M^2G^2\delta^2 d}{\mu^4} + 
    \frac{6 G^2 c}{\mu^2 T} +
    \frac{528 \sqrt{d}\delta G^3 c^2}{\gamma\mu T^2}.
}
\end{align*}
\end{theorem}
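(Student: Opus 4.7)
The strategy is to extend the quadratic-case argument of Theorem~\ref{thm:quadratic} by treating the non-quadratic part of $f$ as a bias perturbation whose magnitude is controlled through $M$ and the fourth-moment noise-ball bound of Lemma~\ref{lemmaNoiseBall}. First, I linearize at $w^*$: with $H = \nabla^2 f(w^*)$ and remainder $r(w) = \nabla f(w) - H(w-w^*)$, Hessian-Lipschitzness gives $\|r(w)\| \leq (M/2)\|w-w^*\|^2$. Writing $b_t = w_t - w^*$ and $B = I - \alpha H$, the low-precision SGD step becomes
\[
b_{t+1} \;=\; B\,b_t \;-\; \alpha\,\zeta_{t+1} \;-\; \alpha\,r(w_t) \;+\; \epsilon_{t+1},
\]
where $\zeta_{t+1} = \nabla\tilde f_{t+1}(w_t) - \nabla f(w_t)$ is the zero-mean gradient noise ($\mathbb{E}\|\zeta\|^2 \leq G^2$) and $\epsilon_{t+1}$ is the zero-mean quantization noise ($\mathbb{E}\|\epsilon\|^2 \leq \delta^2 d/4$). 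This is exactly the quadratic recursion from the proof of Theorem~\ref{thm:quadratic} plus the single extra term $-\alpha r(w_t)$.

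Next, unroll the recursion and exchange the order of summation. With $K = T/c$ averaged iterates,
\[
\bar b \;=\; \frac{1}{K}\Bigl(\sum_{k=1}^K B^{kc}\Bigr) b_0 \;+\; \frac{1}{K}\sum_{s=1}^T S_s\bigl(-\alpha\zeta_s - \alpha\,r(w_{s-1}) + \epsilon_s\bigr),
\]
where $S_s = B^{\lceil s/c\rceil c - s}\sum_{j=0}^{K-\lceil s/c\rceil} B^{jc}$. The key operator-norm estimate is $\|(I-B^c)^{-1}\|_2 \leq 2/\min(\alpha\mu c, 1)$, which follows from $\|B\|_2 \leq 1-\alpha\mu$ combined with the elementary inequality $1-(1-x)^c \geq \tfrac{1}{2}\min(cx,1)$; this is the source of the $1/\sqrt{\gamma}$ factor. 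Using $\mathbb{E}\|X_1+\cdots+X_4\|^2 \leq 4\sum_i \mathbb{E}\|X_i\|^2$, I bound the four pieces separately: (i) the initial transient, via $\mathbb{E}\|b_0\|^2 \leq \chi G\delta\sqrt d/\mu$ from Lemma~\ref{lemmaNoiseBall}, yielding the third ($O(1/T^2)$) term; (ii) the gradient-noise sum, which is a martingale difference with respect to the natural filtration, giving the $O(G^2 c/(\mu^2 T))$ second term by independent-variance summation; (iii) the quantization-noise sum, handled identically and absorbed into the second term; and (iv) the remainder, via Cauchy-Schwarz
\[
\bigl\|\tfrac{\alpha}{K}\textstyle\sum_s S_s\, r(w_{s-1})\bigr\|^2 \;\leq\; \tfrac{\alpha^2}{K^2}\bigl(\textstyle\sum_s\|S_s\|\bigr)^2\,\max_s\|r(w_{s-1})\|^2,
\]
combined with $\mathbb{E}\|r(w_s)\|^2 \leq (M^2/4)\,\mathbb{E}\|b_s\|^4 \leq M^2\chi^2 G^2\delta^2 d/(4\mu^2)$ from Lemma~\ref{lemmaNoiseBall}, producing the first (non-vanishing) bias term $O(M^2 G^2\delta^2 d/\mu^4)$.

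The hard part is the remainder piece (iv): unlike $\zeta_s$ and $\epsilon_s$, the term $r(w_s)$ has nonzero conditional mean driven by the Hessian curvature, so the martingale variance-summation cancellation is unavailable and this bias does not vanish as $T\to\infty$. Extending the quadratic proof therefore hinges on Lemma~\ref{lemmaNoiseBall}: without a fourth-moment control on $\|b_s\|$, the Hessian-Lipschitz remainder cannot be bounded at all, and no $M$-dependent bias estimate would emerge. Once the fourth-moment bound is in hand, summing the three contributions via the triangle inequality and collecting constants yields the claimed inequality.
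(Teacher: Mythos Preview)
Your plan follows essentially the same route as the paper: linearize at $w^*$ with $H=\nabla^2 f(w^*)$, unroll the resulting quadratic-plus-remainder recursion, and split into a transient piece, a zero-mean noise piece handled by martingale orthogonality, and a bias piece controlled via $\|r(w)\|\leq M\|w-w^*\|^2$ together with the fourth-moment bound of Lemma~\ref{lemmaNoiseBall}. The paper organizes the algebra slightly differently---it telescopes $w_0-w_{cK}$ and then inverts $I-(I-\alpha H)^c$ to isolate $\bar w-w^*$, rather than unrolling each $b_{kc}$ back to $b_0$---and it combines your $\zeta$ and $\epsilon$ into a single zero-mean term $\psi_t$, but these are cosmetic. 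Your operator estimate $1-(1-x)^c\ge\tfrac12\min(cx,1)$ is in fact cleaner than the paper's informal small-/large-$c$ case split.

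One technical point to tighten in step~(iv): you pull out $\max_s\|r(w_{s-1})\|^2$ pathwise and then invoke $\mathbb{E}\|r(w_s)\|^2\le M^2\chi^2G^2\delta^2 d/(4\mu^2)$, but Lemma~\ref{lemmaNoiseBall} only controls $\mathbb{E}\|r(w_s)\|^2$ for each fixed $s$, not $\mathbb{E}\bigl[\max_s\|r(w_s)\|^2\bigr]$. Replace the $\max$ extraction with weighted Cauchy--Schwarz, $\bigl\|\sum_s S_s r_s\bigr\|^2\le\bigl(\sum_s\|S_s\|\bigr)\bigl(\sum_s\|S_s\|\,\|r_s\|^2\bigr)$, and take expectations termwise; this is what the paper does (via Jensen on the convex combination) and it delivers the $c$-independent constant in the bias term once you observe $\sum_s\|S_s\|\le K/(\alpha\mu)$.
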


The first term $3\chi^2M^2G^2 \mu^{-4} \delta^2 d$ represents the squared errors caused by the noise ball, the asymptotic accuracy floor to which SWALP will converge given enough iterations.
The error caused by this noise ball is proportional to $M^2$, which measures how much the objective function differs from the quadratic setting. 
The second and third term converge to $0$ at a $O(1/T)$ rate, which shows that the whole bound will converge to the noise ball at a $O(1/T)$ rate.
Our proof leverages some techniques used in prior work \cite{francisbach}. 
In particular, \citet{francisbach} showed that one could provide a better bound in SGD using $M$, the third derivative of a strongly convex function.
The proofs of our results here are provided in detail in the appendix.

To the best of our knowledge, our result in Theorem~\ref{thm:SWALP} is the tightest known bound for low precision SWA. 
\citet{training-quantized-network-deeper-understanding} also provide results analyzing a  convergence bound for LP-SGD with weight averaging, but their bound is proportional to $\delta$, whereas ours is proportional to $\delta^2$.\footnote{Although their bound is stated in terms of the objective gap $f(\bar{w}_T) - f(w^*)$ whereas ours is the squared distance to the optimum, these metrics are directly comparable as they may differ by at most a factor of $\mu$: $2 (f(\bar{w}_T) - f(w^*)) \ge \mu \| \bar{w}_T - w^* \|_2^2$.}
If we consider a fixed-point representation using $F$ fractional bits, then our bound is proportional $\delta^2 = 2^{-2F}$ whereas the bound from prior work proportional to $\delta = 2^{-F}$.
Asymptotically, we can say that this halves the number of bits we need to decrease the noise ball by some factor, compared with the prior bound.
As this prior bound also describes the convergence behaviour, we can equivalently say that the number of bits in SWALP has double the effect on the noise ball, compared with SGD-LP.

To understand whether SWALP can achieve a better bound than SGD-LP, we need to answer the following question: can SGD-LP algorithm potentially achieve a better bound than the $O(\delta)$ bound proved in \citet{training-quantized-network-deeper-understanding}?
In the following theorem, we show that this is not possible:

\begin{theorem} \label{thm:lower-bound}
Consider the one-dimensional objective function $f(x) = \frac{1}{2}x^2$ with gradient samples $\tilde{f}'(w) = w + \sigma u$ where $u \sim \mathcal{N}(0,1)$. Compute $w_T$ recursively using the quantized SGD updating step: $w_{t+1} = Q_\delta(w_t - \alpha \tilde{f}'(w_t))$. Then there exists a constant $A > 0$ such that for all step size $\alpha > 0$, we have
$\lim_{T\to \infty} \mathbb{E}[w_T^2] \geq \sigma \delta A$.
\end{theorem}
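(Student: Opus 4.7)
Write $y_t := (1-\alpha)w_t - \alpha\sigma u_t$ so that $w_{t+1} = Q_\delta(y_t)$. For $\alpha \ge 2$ the iteration fails to contract and $\mathbb{E}[w_T^2] \to \infty$, making the bound vacuous; I therefore restrict to $\alpha \in (0,2)$. Since stochastic rounding is unbiased with $\operatorname{Var}(Q_\delta(y)\mid y) = r(\delta-r)$, where $r = y \bmod \delta \in [0,\delta)$, independence of $u_t$ from $w_t$ yields the second-moment recursion
\begin{equation*}
\mathbb{E}[w_{t+1}^2] = (1-\alpha)^2\,\mathbb{E}[w_t^2] + \alpha^2\sigma^2 + \mathbb{E}[r_t(\delta-r_t)].
\end{equation*}
If I can show a \emph{uniform-in-$t$ and distribution-free} lower bound $\mathbb{E}[r_t(\delta - r_t)] \ge V$ for a constant $V$ depending only on $(\alpha, \sigma, \delta)$, iterating the recursion gives $\liminf_{T\to\infty}\mathbb{E}[w_T^2] \ge (\alpha^2\sigma^2 + V)/(2\alpha - \alpha^2)$, and the task reduces to choosing $V$ so that this is at least $A\,\sigma\delta$ for every $\alpha \in (0,2)$.

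\textbf{Density bound via Poisson summation and case split.} I would use the elementary inequality $r(\delta-r) \ge (\delta/2)\,d$, where $d := \min(r,\delta - r)$ is the distance from $y_t$ to $\delta\mathbb{Z}$, so it suffices to lower-bound $\mathbb{E}[d(y_t)]$. Because the density of $y_t$ is the convolution of \emph{some} distribution of $(1-\alpha)w_t$ with the Gaussian density $\phi_{\alpha\sigma}$, the density of $y_t \bmod \delta$ has sup-norm at most
\begin{equation*}
B := \Bigl\|\sum_{k\in\mathbb{Z}}\phi_{\alpha\sigma}(\cdot + k\delta)\Bigr\|_\infty \le \frac{1}{\delta}\sum_{n\in\mathbb{Z}} e^{-2\pi^2(\alpha\sigma)^2 n^2/\delta^2} \le \frac{1}{\delta} + \frac{1}{\alpha\sigma\sqrt{2\pi}}
\end{equation*}
by the Poisson summation formula, independent of the law of $w_t$. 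Consequently $\mathbb{P}(d(y_t) \le s) \le 2sB$ for $s \le \delta/2$, and integrating gives $\mathbb{E}[d(y_t)] \ge 1/(4B)$, so $V \ge \delta/(8B) = \delta^2\alpha\sigma\sqrt{2\pi}/\bigl(8(\alpha\sigma\sqrt{2\pi}+\delta)\bigr)$. Now split on $\alpha\sigma\sqrt{2\pi}$ versus $\delta$: when $\alpha\sigma\sqrt{2\pi} \ge \delta$, the gradient-noise contribution alone satisfies $\alpha^2\sigma^2/(2\alpha-\alpha^2) = \alpha\sigma^2/(2-\alpha) \ge \sigma\delta/\bigl(\sqrt{2\pi}(2-\alpha)\bigr) \ge \sigma\delta/(2\sqrt{2\pi})$; when $\alpha\sigma\sqrt{2\pi} < \delta$, the quantization contribution gives $V \ge \delta\alpha\sigma\sqrt{2\pi}/16$, which divided by $2\alpha-\alpha^2$ yields $\sigma\delta\sqrt{2\pi}/32$. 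Taking $A$ to be the smaller of the two absolute constants produces $\liminf_T \mathbb{E}[w_T^2] \ge A\,\sigma\delta$.

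\textbf{Main obstacle.} The only non-routine ingredient is the distribution-free density bound. The trivial estimate $\mathbb{E}[r_t(\delta - r_t)] \ge 0$ yields only $\liminf_T\mathbb{E}[w_T^2] \gtrsim \alpha\sigma^2$, which collapses as $\alpha \to 0$, so one genuinely has to rule out the possibility that the fractional part of $y_t$ concentrates near a grid point. Poisson summation is what converts the Gaussian noise scale $\alpha\sigma$ into the quantitative anti-concentration $\mathbb{E}[d(y_t)] = \Omega(\min(\delta,\alpha\sigma))$ \emph{uniformly in the (possibly pathological) law of $w_t$}; this is what permits an $\alpha$-uniform conclusion, and once it is in hand the case split and the remaining algebra are routine.
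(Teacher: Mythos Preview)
Your proposal is correct and follows the same high-level skeleton as the paper: derive the second-moment recursion
\[
\mathbb{E}[w_{t+1}^2] = (1-\alpha)^2\,\mathbb{E}[w_t^2] + \alpha^2\sigma^2 + \mathbb{E}\bigl[\operatorname{Var}(Q_\delta(y_t)\mid y_t)\bigr],
\]
prove a distribution-free lower bound on the quantization-variance term that scales like $\delta^2\min(1,\alpha\sigma/\delta)$, pass to the fixed point, and then case-split on $\alpha\sigma$ versus $\delta$ to get an $\alpha$-uniform constant. Your handling of $\alpha\ge 2$ and your use of $\liminf$ are the same (slightly informal) moves the paper makes.

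Where you differ is in the key anti-concentration step. The paper proves its lemma by lower-bounding $r(1-r)\ge\frac{1}{8}(1-\cos 2\pi x)$, computing $\mathbb{E}[\cos(2\pi\beta u)]=e^{-2\pi^2\beta^2}$ via the Gaussian characteristic function, and then invoking a \emph{non-constructive} compactness argument (continuity plus the extreme value theorem on a compactified domain) to handle the degenerate corner $\beta\to 0,\;z\in\mathbb{Z}$ and extract a uniform constant $C$. You instead bound the sup-norm of the wrapped density of $y_t\bmod\delta$ by Poisson summation---which uses the same Gaussian Fourier transform---and convert that density bound directly into $\mathbb{E}[d(y_t)]\ge 1/(4B)$. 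Your route is fully quantitative: it produces an explicit $A=\min\bigl(1/(2\sqrt{2\pi}),\sqrt{2\pi}/32\bigr)$ and avoids the compactness step entirely. The paper's argument is perhaps conceptually simpler to state but leaves $A$ implicit; yours is a genuine, and slightly sharper, alternative for the same lemma.
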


The proof is provided in the appendix.
Theorem~\ref{thm:lower-bound} shows that there exists a strongly convex objective function such that $\mathbb{E}[(w_T - w^*)^2] \geq O(\delta)$.
This shows that the asymptotic lower bound for low-precision SGD with the gradient accumulator quantized at every step is $O(\delta)$, which is an asymptotically worse bound compared to the $O(\delta^2)$ bound obtained for SWALP.
Therefore, SWALP achieves a better asymptotic dependency on the quantization gap $\delta$.

\subsection{Experimental validation}\label{sec:convergence-verify}

\textbf{Linear regression.} 
We will use linear regression on a synthetic dataset to empirically verify Theorem~\ref{thm:quadratic}. 
For details on how we generate the synthetic dataset, please refer to appendix.
We train linear regression models using float SGD (SGD-FL), float SWA (SWA-FL), low precision SGD (SGD-LP), and SWALP.
Low-precision models use 8-bit fixed point numbers with 6 fractional bits (i.e., $\delta=2^{-6}$).

The results are displayed in Figure~\ref{fig:convergence} where we plot the square distance between $w_t$ (or $\bar w_t$ for SWALP) and the optimal solution $w^*$.
For reference, we also plot in Figure~\ref{fig:convergence} the squared distance between $Q(w^*)$ and $w^*$ to illustrate the size of quantization noise.
We observe that both SGD-LP and SGD-FL converge to a noise ball, and SGD-LP's noise ball is further away from $w^*$ than SGD-FL's noise ball---indicating that we are operating in a regime where quantization noise matters. 
SWA-FL and SWALP, on the other hand, both converge asymptotically towards the optimal solution.
Notably, SWALP pierces the quantization noise ball and even outperforms $Q(w^*)$.\footnote{Note that SWALP is able to do this because it represents the averaged model in full precision.}
The asymptotic convergence rate of SWALP and SWA-FL appears to be the same, and both appear to follow a $O(1/T)$ convergence trajectory, which validates our results in Theorem~\ref{thm:quadratic}.

\begin{table*}[th!]
\centering
\caption{ Test error (\%) on CIFAR-10 and CIFAR-100 for VGG16 and PreResNet-164 trained in different quantization setting.
}
\label{table:base-results}
{\sc\small
\begin{tabular}{@{}lccccccc@{}}
\toprule
& & \multicolumn{2}{c}{Float} & \multicolumn{2}{c}{8-bit Big-block} & \multicolumn{2}{c}{8-bit Small-block } \\ 
\cmidrule(r){3-4}\cmidrule(r){5-6}\cmidrule(r){7-8}
Dataset  & Model & SGD & SWA & SGDLP & SWALP & SGDLP & SWALP \\ \midrule
         & VGG16 & 6.81 $\pm{0.09}$  & 6.51 $\pm{0.14}$ & 8.23 $\pm{0.08}$ & 7.36 $\pm{0.05}$ & 7.61 $\pm{0.15}$ & 
         6.70 $\pm{0.12}$ \\
CIFAR-10 & PreResNet-164  & 4.63 $\pm{0.18}$  & 4.03 $\pm{0.10}$ & 6.51 $\pm{0.08}$ & 5.61 $\pm{0.17}$  & 5.83 $\pm{0.05}$ & 5.01 $\pm{0.14}$ \\
\midrule
          & VGG16          & 27.23 $\pm{0.17}$  & 25.93 $\pm{0.21}$ & 30.56 $\pm{0.67}$ & 28.66 $\pm{0.17}$ & 29.59 $\pm{0.32 }$ &
          26.65 $\pm{0.29}$ \\
        %   \textcolor{red}{26.46 $\pm{0.12}$} \\
CIFAR-100 & PreResNet-164  & 22.20 $\pm{0.57}$  & 19.95 $\pm{0.19}$ & 25.84 $\pm{0.52}$ & 24.92 $\pm{0.60}$ & 23.97 $\pm{0.08}$ & 21.76 $\pm{0.28}$ \\
\bottomrule
\end{tabular}}
\end{table*}

\textbf{Logistic regression.} 
To empirically validate Theorem~\ref{thm:SWALP}, we use logistic regression with L2 regularization on the MNIST dataset~\cite{MNIST}.
Following prior work~\cite{HALP, SVRG}, we choose $10^{-4}$ weight decay, which makes the objective a strongly convex function with $M\neq0$.
Similarly to our linear regression experiment, we use SGD-FL, SWA-FL, SGD-LP, and SWALP to train logistic regression models. 
For this experiment, we measure the norm of gradient at each iteration to illustrate the convergence of the algorithm; this is a more useful metric because MNIST is sparse and poorly conditioned, and it is a metric that has been used for logistic regression on MNIST in previous work~\cite{HALP}.
For SWALP and SGD-LP, we use 4-bit word length and 2-bit fractional length.
See appendix for detailed hyper-parameters.

In Figure~\ref{fig:convergence}, we again observe that SGD-LP converges to a larger noise ball than SGD-FL, which is caused by the additional quantization noise of low precision training.
Both SWA-FL and SWALP pierce the noise ball of SGD-FL.
However, unlike SWA-FL whose gradient norm appears to converge to zero, SWALP still hits a noise ball, albeit one that is much smaller than the one from SGD.
This validates Theorem~\ref{thm:SWALP}, which predicts that SWALP will converge to a noise ball when the problem setting is strongly convex yet non-quadratic (i.e. $M \neq 0$).

Figure~\ref{fig:convergence} also compares the training errors of logistic regression trained with different numbers of fractional bits, which determine $\delta$ in the theorem. 
Both SGD-LP and SWALP are trained with 2 integer bits and the same hyper-parameters, but we vary the number of fractional bits.
SWALP recovers the performance of the full precision SGD model with only 4 fractional bits, while SGD-LP needs 10 bits to do so.
This result validates the claim that SWALP needs only half the number of bits to achieve the same performance, which is predicted by Theorem~\ref{thm:SWALP} in terms of the asymptotic upper bound.
Although our theory bounds the convergence in terms of the training set, this conclusion still holds when evaluated on MNIST test set (see appendix).

%%%%%%%%%%%%%%%%%%%%%%%%%%%%%%%%%%%%%%%%%%%%%%%%%%%%%%%%%%%%%%%%%%%%%%%%%%%%%%%%%%%%%%
% Experiments
%%%%%%%%%%%%%%%%%%%%%%%%%%%%%%%%%%%%%%%%%%%%%%%%%%%%%%%%%%%%%%%%%%%%%%%%%%%%%%%%%%%%%%

\section{Experiments}\label{sec:expr}

In this section, we demonstrate the effectiveness of SWALP on non-convex problems in deep learning.

\textbf{Datasets.} 
We use the CIFAR~\cite{CIFAR10} and ImageNet~\cite{imagenet} datasets for our experiments.
Following prior work~\cite{SWA, WAGE}, we apply standard preprocessing and data augmentation for experiments on CIFAR datasets.
Preprocessing and data augmentation for ImageNet are adapted from the public PyTorch example~\cite{PyTorch}.

\textbf{Architectures.} 
We use the VGG-16~\cite{VGG} and Pre-activation ResNet-164~\cite{preact-resnet} on CIFAR datasets as in \citet{SWA, SWA-repo}.
For ImageNet experiments, we use ResNet-18~\cite{resnet}.

\textbf{Block Design.}
\citet{error-analysis} shows that appropriate block assignments are essential to achieve good performance with BFP.
In our experiment, we will test two block assignments: \emph{Big-block} and \emph{Small-block}.
The \emph{Big-block} design puts all numbers within the same tensor into the same block.
For example, the activation of a convolution layer may have shape $(B, C, W, H)$, and the Big-block design assigns one shared exponent for $B \times C \times W \times H$ numbers in this tensor.
The \emph{Small-block} design will follow \citet{error-analysis} and \citet{dorefa-net} except that we assign only one exponent for the following tensors: 1) bias in convolution and fully connected layers; 2) the learned scaling parameter in batch normalization layers, and 3) the learned shift parameter in batch normalization layers.
We empirically found that with these modifications, we can reduce memory consumption while regularizing the model. 
Storing our VGG16 network in 32-bit float requires 53.33 MB memory, while using 8-bit Small-block BFP with 8-bit shared exponents reduces this memory requirement to 14.59 MB.
Moreover, Small-block design uses only 5.2 KB more memory compared to the Big-block design, which is a negligible overhead.
In Sec~\ref{sec:expr-perf}, we will compare the performance between these two block assignment methods.

\subsection{CIFAR Experiments}\label{sec:expr-perf}
We demonstrate how SWALP (Algorithm~\ref{alg:SWALP-all}) is applied to train DNNs in image classification tasks on CIFAR-10 and CIFAR-100.
To examine how the block design of BFP affects performance, we train each network with both Big-block and Small-block BFP.
We use the reported hyper-parameters in \citet{SWA}, for full-precision SGD, SWA, and low-precision SGD runs.
SWALP's hyper-parameters are obtained from grid search on a validation set.
Please see the Appendix for more detail.

Table~\ref{table:base-results} shows results for different combinations of architecture, dataset, and quantization method.
First, the Small-block model outperforms all the Big-block models by a large margin.
SWALP also consistently outperforms SGD-LP across architectures and datasets, showing the promise of SWA for low precision training in deep learning.
Although the performance of SWALP does not match that of full precision SWA, the performance improvement of SWALP over SGD-LP is larger than that of SWA over SGD.
For example, for VGG16 trained with 8-bit Small-block BFP on CIFAR100, applying SWALP improves the SGD-LP performance by $2.94\%$ whereas the corresponding full-precision improvement is only $1.3\%$.
Notably, the performance of SWALP for VGG16 and PreResNet-164 trained with Small-block BFP can match that of full-precision SGD. 
On CIFAR-100 dataset, SWALP with Small-block BFP even outperforms the full-precision SGD baseline by $0.58\%$ with VGG-16 and by $0.44\%$ with PreResNet-164.

\begin{figure}[t]
\centering
\includegraphics[width=\linewidth]{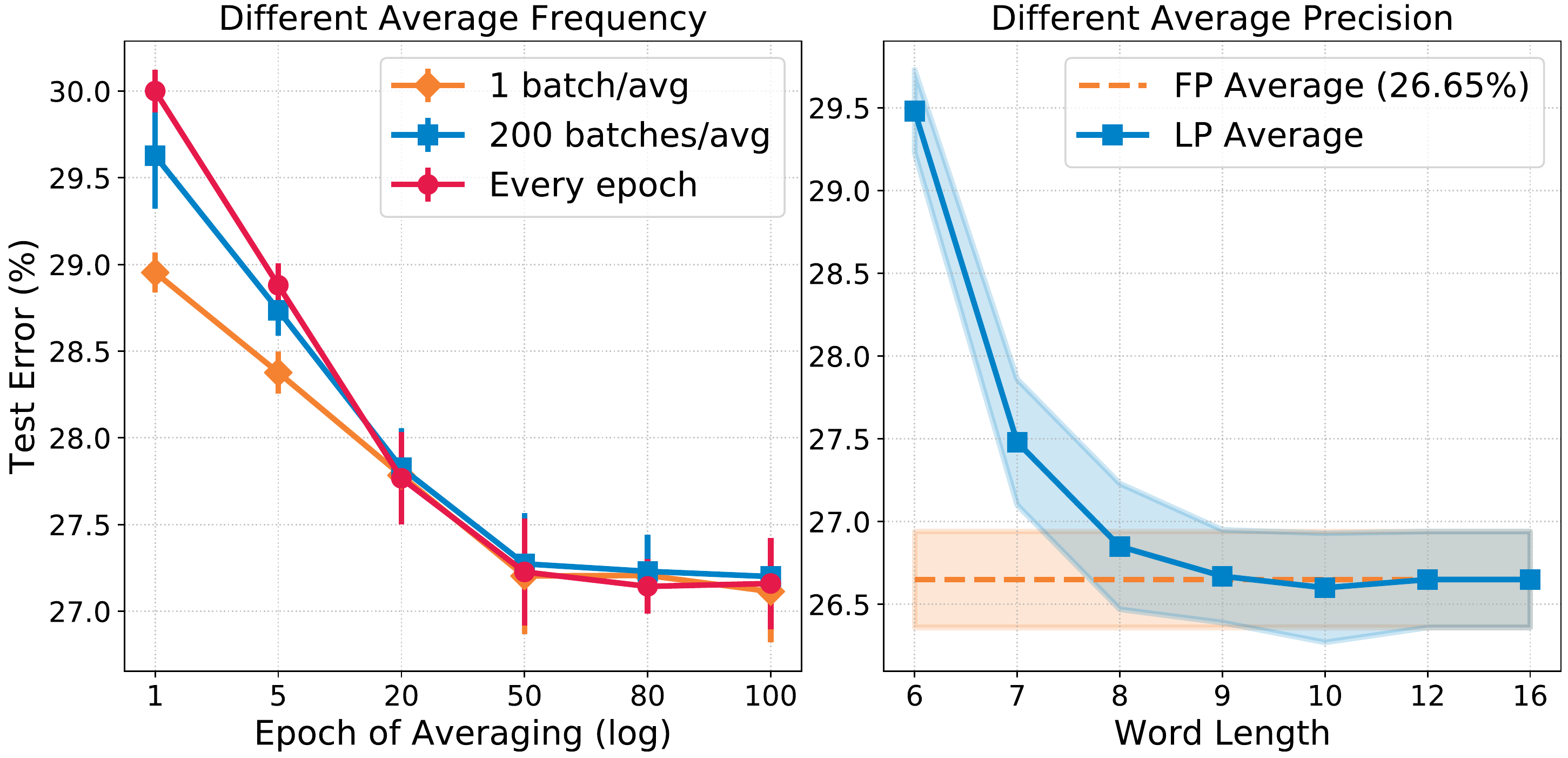}
\caption{
CIFAR-100 classification test error (\%). \textbf{Left:} Different averaging frequency. \textbf{Right:} Different averaging precision.
}
\label{fig:diffc_qswa}
\vspace{-2mm}
\end{figure}

\paragraph{Averaging in Different Frequency.}
Both Theorem~\ref{thm:quadratic} and Theorem~\ref{thm:SWALP} show that averaging more frequently leads to faster convergence, but changing $c$ does not affect the final convergence results. 
In this section, we empirically study the effect of $c$ using VGG-16 and CIFAR-100.
Previously, all runs compute the weight average once per epoch, following the convention from \citet{SWA}. 
We compare such default averaging frequency with higher frequencies including averaging every batch and every 200 batches. 
We keep the quantization method (Small-block BFP) and all other hyper-parameters unchanged.

The left panel of Figure~\ref{fig:diffc_qswa} show that averaging more frequently leads to faster convergence. 
For example, averaging every batch achieves an error rate of $28.85\%$ after one epoch, which is much lower than averaging only once per epoch (i.e. $30.00\%$).
However, the performance gap between high and low averaging frequency quickly disappears as we apply SWALP for more epochs. 
After 20 epochs, we observe almost no difference in performance between averaging frequencies. 
This result suggests that the averaging frequency does not affect final performance after sufficiently many epochs. 
We also observe that the test error keeps decreasing even after 20 epochs for all averaging frequencies, so averaging for more epochs is necessary for convergence.

\vspace{-1mm}
\paragraph{Averaging in Different Precision.}
We study the effect of averaging in different precisions while keeping quantization and other hyper-parameters unchanged.
The weight averaging is computed with low-precision operations as follows:
$\bar{w}_{m+1} \leftarrow Q_{\text{SWA}}((\bar{w}_{m} \cdot m + w_t)/(m + 1))$,
where $Q_{\text{SWA}}$ is a BFP quantizer with word length $W_{\text{SWA}}$.
In this experiment, we vary $W_{\text{SWA}}$ from $6$ to $16$ bits. 
The averaging updates are first computed in high precision before quantizing down to $W_{\text{SWA}}$ bits.
During inference, we will quantize the activation into $W_{\text{SWA}}$-bit Small-block BFP.

We report the results in the right panel of Figure~\ref{fig:diffc_qswa} of training a VGG-16 model with 8-bit Small-block BFP. 
We observe that the averaged weights can be computed in 9-bits with essentially no performance decrease compared to averaging in full precision.
When we use 8-bits BFP numbers to store the averaged model during training, there is a minor performance loss compared to full precision averaging.
That being said, the error rate ($26.85\%$) is still lower than those of the SGD-LP baseline ($29.59\%$) and the SGD-FP baseline($27.23\%$).
Averaging in lower than  8-bit precision tends to substantially hurt performance. 
This suggests that in order to fully realize the benefits of SWALP, one needs to compute the weight averaging in a slightly higher precision (i.e. for 8-bit weights, we need to compute the average in 9-bit).
These results suggest that we could replace step (4) of Algorithm~\ref{alg:SWALP-all} with this quantized averaging step to eliminate high-precision storage during training.
With such modifications, SWALP can produce a low-precision model that performs comparably to a \emph{full-precision} SGD model without any high precision storage.

\subsection{ImageNet Experiments}

We further evaluate SWALP's performance for a large scale image classification task, by training a ResNet-18 on ImageNet. We obtain the results for SGD and SGD-LP using hyper-parameters suggested by \citet{resnet}. 
For all low-precision experiments, we use Small-block BFP. Please see Appendix for the details on hyper-parameters. 
We present the results in Table~\ref{table:imagenet}. 

ImageNet contains substantially more information than CIFAR, and is more sensitive to hyper-parameter tuning; consequently, there is a greater drop in performance for ImageNet when using low precision computations: from $30.49\%$ with SGD to $36.56\%$ with SGD-LP. 
Although in preliminary experiments, SWALP does not entirely close this larger performance gap, achieving $34.89\%$ error after 10 epochs, it still leads to a substantial improvement in accuracy over SGD-LP. The performance gain with SWALP over SGD-LP is also greater than for SWA over SGD: after 10 epochs of weight averaging, there is a $1.67\%$ improvement in low precision compared to $0.84\%$ in full precision.

\begin{table}[t]
\centering
\caption{
ImageNet experiment results with ResNet-18. 
90+$X$ epochs of SWA (or SWALP) means running weight averaging for $X$ epochs starting at 90 epocth.
}\label{table:imagenet}
\begin{threeparttable}
\begin{tabular}{@{}ccc@{}}
\toprule
Model               & Epochs & Top-1 Error (\%) \\
\midrule
SGD                 & 90     & 30.49               \\
SWA                 & 90+10    & 29.74               \\ 
\midrule
SDGLP               & 90     & 36.56               \\ 
SWALP               & 90+10    & 34.89               \\ 
SWALP               & 90+30    & 34.34               \\ 
SWALP~\tnote{\dag}  & 90+30    & 34.18               \\ 
\bottomrule
\end{tabular}
\begin{tablenotes}
	\item[\dag] Averaging 50 times per epoch. 
\end{tablenotes}
\end{threeparttable}
\vspace{-2mm}
\end{table}

\vspace{-1mm}
\section{Conclusion}
\label{sec:concl}
We have proposed SWALP, a convenient approach to low-precision training that outperforms low-precision SGD and is competitive with full-precision SGD, even when trained in 8 bits. 
% SWALP is based on averaging SGD iterates in low precision, which has a natural motivation: we can filter through quantization noise by combining weights that have been rounded down with weights that have been rounded up. 
SWALP is based on averaging SGD iterates in low precision, motivated by the intuition that averaging could reduce the quantization noise introduced by stochastic rounding.
In the future, it would be exciting to explicitly consider loss geometry in building low precision solutions.

\clearpage
\section*{Acknowledgements}\label{sec:ack}
Polina Kirichenko and Andrew Gordon Wilson were supported by NSF IIS-1563887, an Amazon Research Award, and Facebook Research Award.
We thank Google Cloud Platform Research Credits program for providing computational resources.

\bibliography{ref}
\bibliographystyle{icml2019}

\clearpage
\onecolumn

\begin{appendices}

\newtheorem{innercustomgeneric}{\customgenericname}
\providecommand{\customgenericname}{}
\newcommand{\newcustomtheorem}[2]{%
  \newenvironment{#1}[1]
  {%
   \renewcommand\customgenericname{#2}%
   \renewcommand\theinnercustomgeneric{##1}%
   \innercustomgeneric
  }
  {\endinnercustomgeneric}
}

\newcustomtheorem{customthm}{Theorem}
\newcustomtheorem{customlemma}{Lemma}

\renewcommand{\qedsymbol}{$\blacksquare$}
\newcommand{\R}[0]{\mathbb{R}}
\newcommand{\Prob}[1]{\mathbf{P}\left( #1 \right) }
\newcommand{\Exv}[1]{\mathbb{E}\left[ #1 \right]}
\newcommand{\Exvud}[2]{\mathbb{E}_{#1}\left[ #2 \right]}
\newcommand{\norm}[1]{\left\| #1 \right\|}
\newcommand{\Abs}[1]{\left| #1 \right| }

% \nipsfinalcopy is no longer used
% \maketitle
\section{Overview}
The appendix will contain full proofs for all theorems and lemmas in the main paper, implementation details for all the experiments, statistics for the figures, and some additional results combining SWALP with other methods.
The proof for Theorem~\ref{thm:quadratic} will be presented in Sec~\ref{sec:appendix-thm-quadratic}. 
The proof for Lemma~\ref{lemmaNoiseBall} will be presented in Sec~\ref{sec:appendix-lemma}.
The proof for Theorem~\ref{thm:SWALP} will be shown in Sec~\ref{sec:thm4.3}.
The proof for Theorem~\ref{thm:lower-bound} will be shown in Sec~\ref{sec:thm-lowerbound}.
The implementation details of the linear regression and the logistic regression experiments will be in Sec~\ref{sec:linreg} and Sec~\ref{sec:logreg} respectively.
The implementation details for deep learning experiments will be presented in Sec~\ref{sec:dlexp}.
We show results combining SWALP with WAGE~\cite{WAGE} in Sec~\ref{sec:wage-swalp}.
In the rest sections, we will provide numbers used for the figure.

\section{Proof for Theorem~\ref{thm:quadratic}}\label{sec:appendix-thm-quadratic}

Here, we provide a more detailed proof of Theorem~\ref{thm:quadratic}.

Consider a quadratic objective function $f(w) = (w - w^*)^T A (w - w^*) / 2$ for some $A \in \mathbb{R}^{d \times d}$ and $w^* \in \mathbb{R}^d$ is the optimal solution.
Assume $A \succeq \mu I$, where $\mu > 0$ is the strong convexity parameter of this function.
Suppose that we want to minimize $f$ using SWALP with gradient samples $\nabla \tilde f(w)$ that satisfy $\mathbb{E}[\nabla \tilde f(w)] = \nabla f(w) = A (w - w^*)$.
Suppose that the variance of these samples always satisfies $\mathbb{E}[\| \nabla \tilde f(w) - \nabla f(w) \|_2^2] \le \sigma^2$ for some constant $\sigma$; this is a standard assumption used in the analysis of SGD.
Then we can prove the following:

\begin{customthm}{\ref{thm:quadratic}}
Suppose we run SWALP under the above assumptions with a cycle length $c$ and $0 < \alpha < \frac{1}{2}\|A\|_2$.
The expected squared distance to the optimum of SWALP's output is bounded by
\[{
\mathbb{E}\left[\|\bar{w}_T - w^*\|^2 \right]
\leq
\frac{\|w_0 - w^*\|^2}{\alpha^2\mu^2T^2} + 
\frac{c(\alpha^2\sigma^2 + \frac{\delta^2d}{4})}{\alpha^2\mu^2T}
}\]
\end{customthm}

\begin{proof}
We start by rewriting the iterates of low-precision SGD as
\begin{align*}
    w_{t+1} &= Q_\delta(w_t - \alpha \nabla \tilde{f}(w_t)) \\
    &= w_t - \alpha A(w_t-w^*) + \zeta_t
\end{align*}
where $\zeta_t$, the error term is explicitly defined as
\begin{align*}
    \zeta_t &= Q_\delta(w_t - \alpha \nabla \tilde{f}(w_t)) - w_t + \alpha A(w_t - w^*) \\
    &= Q_\delta(w_t - \alpha \nabla \tilde{f}(w_t)) - w_t + \alpha A(w_t - w^*) + \alpha \nabla \tilde{f}(w_t) - \alpha \nabla \tilde{f}(w_t) \\
    &= \alpha(A(w_t - w^*) - \nabla\tilde{f}(w_t)) + Q_\delta(w_t - \alpha \nabla \tilde{f}(w_t)) - (w_t - \alpha \nabla \tilde{f}(w_t)).
\end{align*}

We know that $\mathbb{E}[\zeta_i] = 0$ since
\begin{align*}
    \mathbb{E}[\zeta_t] &= \mathbb{E}[\alpha(A(w_t - w^*) - \nabla\tilde{f}(w_t)) + Q_\delta(w_t - \alpha \nabla \tilde{f}(w_t)) - (w_t - \alpha \nabla \tilde{f}(w_t))] \\
    &=\alpha \big(A(w_t - w^*) - \mathbb{E}[\nabla\tilde{f}(w_t)]\big) + \big(\mathbb{E}[Q_\delta(w_t - \alpha \nabla \tilde{f}(w_t))] - (w_t - \alpha \nabla \tilde{f}(w_t))\big) \\
    &= 0.
\end{align*}
Moreover, due to independence, we can bound the variance of $\zeta_t$ as
\begin{align*}
    \mathbb{E}\left[ \| \zeta_t \|^2 \right] 
    &= 
    \mathbb{E}\left[ \| \alpha(A(w_t - w^*) - \nabla\tilde{f}(w_t)) \|^2 \right] 
    + 
    \mathbb{E}\left[ \| Q_\delta(w_t - \alpha \nabla \tilde{f}(w_t)) - (w_t - \alpha \nabla \tilde{f}(w_t)) \|^2 \right] \\
    &\leq
    \alpha^2\sigma^2 + \frac{\delta^2 d}{4}
\end{align*}
where $d$ is the dimension of the solution and $\sigma$ is an upper bound for $A(w_t - w^*)$.
Then 
\begin{align*}
    w_{t+1} - w^* &= w_t - w^* - \alpha A(w-w^*) + \zeta_t \\
    &= (I-\alpha A)(w_t - w^*) + \zeta_t
\end{align*}
Expanding this formula we will have
\begin{align*}
    w_t - w^* = (I-\alpha A)^t(w_0-w^*) + \displaystyle\sum_{i=0}^{t-1} (I-\alpha A)^{t-i-1}\zeta_i
\end{align*}
Computing the distance from the average $\bar{w}_T$ to $w^*$, we get
\begin{align*}
    \bar{w}_K - w^* &= \frac{1}{K}\displaystyle\sum_{t=1}^K w_{ct} - w^* \\
    &=\frac{1}{K}\displaystyle\sum_{t=1}^K\bigg(
    (I-\alpha A)^{ct}(w_0-w^*) + \displaystyle\sum_{i=0}^{ct-1} (I-\alpha A)^{ct-i-1}\zeta_i
    \bigg) \\
    &=\frac{1}{K}\bigg(\displaystyle\sum_{t=1}^K(I-\alpha A)^{ct}\bigg)(w_0-w^*) 
    + \frac{1}{K}\displaystyle\sum_{t=1}^K\displaystyle\sum_{i=0}^{ct-1} (I-\alpha A)^{ct-i-1}\zeta_i. \\
\end{align*}
Note that the first term is a constant; let that be $\chi_K$. Now analyzing the variance:
\begin{align*}
\mathbb{E}\left[\|\bar{w}_K - w^*\|^2\right] &= \mathbb{E}\left[
    \left\|\chi_K + \frac{1}{K}\displaystyle\sum_{t=1}^K\displaystyle\sum_{i=0}^{ct-1} (I-\alpha A)^{ct-i-1}\zeta_i \right\|^2
\right] \\
&= \|\chi_K\|^2 + \mathbb{E}\left[
    \left\|\frac{1}{K}\displaystyle\sum_{t=1}^K\displaystyle\sum_{i=0}^{ct-1} (I-\alpha A)^{ct-i-1}\zeta_i \right\|^2
\right]
\\&\hspace{4em}+ 
    \frac{2}{K}\chi_K^T\displaystyle\sum_{t=1}^K\displaystyle\sum_{i=0}^{ct-1} (I-\alpha A)^{ct-i-1}\mathbb{E}[\zeta_i] \\
&= \|\chi_K\|^2 + \mathbb{E}\left[
    \left\|\frac{1}{K}\displaystyle\sum_{t=1}^K\displaystyle\sum_{i=0}^{ct-1} (I-\alpha A)^{ct-i-1}\zeta_i \right\|^2
\right] \\ 
&= \|\chi_K\|^2 + \frac{1}{K^2} \mathbb{E}\left[
    \left\|\sum_{i=0}^{cK-1}\displaystyle\sum_{t=\lfloor i/c \rfloor + 1}^{K} (I-\alpha A)^{ct-i-1}\zeta_i \right\|^2
\right] 
\end{align*}
Now leveraging the fact that the $\zeta_i$ are zero-mean and independent, we get
\begin{align*}
\mathbb{E}\left[\|\bar{w}_K - w^*\|^2\right]
&= \|\chi_K\|^2 + \frac{1}{K^2} \sum_{i=0}^{cK-1} \mathbb{E}\left[
    \left\| \sum_{t=\lfloor i/c \rfloor + 1}^{K} (I-\alpha A)^{ct-i-1}\zeta_i \right\|^2
\right] \\
&\leq \|\chi_K\|^2 + \frac{1}{K^2} \sum_{i=0}^{cK-1} \left\| \sum_{t=\lfloor i/c \rfloor + 1}^{K} (I-\alpha A)^{ct-i-1} \right\|_2^2 \mathbb{E}\left[
    \left\| \zeta_i \right\|^2
\right]  \\
&\leq \|\chi_K\|^2 + \frac{1}{K^2} \sum_{i=0}^{cK-1} \left\| \sum_{j=0}^{\infty} (I-\alpha A)^j \right\|_2^2 \mathbb{E}\left[
    \left\| \zeta_i \right\|^2
\right] 
\end{align*}
where in the final step the inequality holds because every term in the finite sum within the norm also must appear in the infinite sum.
Next, since $0 < \alpha < \|A\|_2/2$, we know the series sum $ \sum_{j=0}^{\infty} (I-\alpha A)^{j}$ will converge, and we have:
\begin{align*}
\left\| \sum_{j=0}^{\infty} (I-\alpha A)^{j} \right\|_2^2
= 
\left\|(I - (I-\alpha A))^{-1}\right\|_2^2 
=
\left\|\frac{1}{\alpha}A^{-1} \right\|_2^2
\leq
\frac{1}{\alpha^2\mu^2}.
\end{align*}

Putting this back we get,
\begin{align*}
\mathbb{E}[||\bar{w}_K - w^*||^2]
&\leq \|\chi_K\|^2 + 
    \frac{1}{K^2\alpha^2\mu^2} \displaystyle\sum_{i=0}^{cK-1}\mathbb{E}[\|\zeta_i\|^2]  \\
&\leq \|\chi_K\|^2 + 
    \frac{c}{K\alpha^2\mu^2} \left(\alpha^2\sigma^2 + \frac{\delta^2d}{4} \right).
\end{align*}

Finally, we analyze the term $\|\chi_K\|^2$, we will get
\begin{align*}
\|\chi_K\|^2 &= \left\|\frac{1}{K}\big(\displaystyle\sum_{t=1}^K(I-\alpha A)^{ct}\big)(w_0-w^*)\right\|^2 \\
&\leq \frac{1}{K^2}\left\|\displaystyle\sum_{t=1}^K(I-\alpha A)^{ct}\right\|_2^2 \  \left\|w_0 -w^*\right\|^2 \\
&\leq \frac{1}{K^2}\left\|\displaystyle\sum_{t=1}^\infty (I-\alpha A)^t\right\|_2^2 \ \left\|w_0 -w^*\right\|^2 \\
&= \frac{1}{K^2}\left\|(I - (I - \alpha A))^{-1}\right\|_2^2 \ \left\|w_0 -w^*\right\|^2 \\
&= \frac{1}{K^2}\left\|\frac{1}{\alpha}A^{-1}\right\|_2^2 \ \left\|w_0 -w^*\right\|^2 \\
&\leq \frac{1}{K^2\alpha^2\mu^2} \ \left\|w_0 -w^*\right\|^2
\end{align*}

Putting this bound together, we get:
\begin{align*}
    \mathbb{E}[||\bar{w}_K - w^*||^2]
&\leq \frac{1}{K^2\alpha^2\mu^2} \ \|w_0 -w^*\|^2 + 
    \frac{c}{K\alpha^2\mu^2} \left(\alpha^2\sigma^2 + \frac{\delta^2d}{4} \right)
\end{align*}
which is what we wanted to show.
\end{proof}

\section{Proof for Lemma~\ref{lemmaNoiseBall}}\label{sec:appendix-lemma}

Let's start by analyzing an update step
\[
  w_{t+1} = Q_{\delta}\left( w_t - \alpha \nabla \tilde f_t(w_t) \right)
\]
where $\Exv{\tilde f_t} = f$ and $f$ is strongly convex with parameter $\mu$, Lipschitz continuous with parameter $L$, and has a global minimum at $w^*$.
Further suppose that
\[
  \norm{ \nabla \tilde f_t(w) - \nabla f(w) } \le G
\]
for all $w \in \mathbb{R}^d$.

\begin{customlemma}{\ref{lemmaNoiseBall}}
  Under the above conditions, suppose that we run with step size
  \[
    \alpha
    =
    \sqrt{ \frac{\delta^2 d}{G^2} }
  \]
  and further suppose that $\delta$ is small enough that this step size satisfies
  $(1 - 2 \alpha \mu + \alpha^2 L)^2 \le 1 - 2 \alpha \mu$ and $\alpha \mu < 1$.
  Suppose that we run low-precision SGD for a number of time steps that is at least
  \[
    T
    \ge
    \frac{2 G}{\mu \delta \sqrt{d}} \log\left( \frac{\mu \norm{w_0 - w^*}^2}{44 G \delta \sqrt{d}} \right).
  \]
  Then
  \[
    \Exv{ \norm{w_T - w^*}^4 }
    \le
    \frac{44^2 G^2 \delta^2 d}{\mu^2}.
  \]
  \label{lemmaNewSingleIterBound}
\end{customlemma}

\begin{proof}
We start by bounding the quantity we are interested in at the next time step with
\begin{align*}
  \norm{w_{t+1} - w^*}^4
  &=
  \norm{Q_{\delta}\left( w_t - \alpha \nabla \tilde f_t(w_t) \right) - w^*}^4 \\
  &=
  \norm{w_t - w^* - \alpha \nabla f(w_t) - \left(
    w_t - \alpha \nabla \tilde f_t(w_t)
    -
    Q_{\delta}\left( w_t - \alpha \nabla \tilde f_t(w_t) \right)
    +
    \alpha \nabla \tilde f_t(w_t)
    -
    \alpha \nabla f(w_t)
  \right) }^4.
\end{align*}
If we define
\[
  u_t
  =
  -\left(
  w_t - \alpha \nabla \tilde f_t(w_t)
  -
  Q_{\delta}\left( w_t - \alpha \nabla \tilde f_t(w_t) \right)
  +
  \alpha \nabla \tilde f_t(w_t)
  -
  \alpha \nabla f(w_t)\right),
\]
then since we use unbiased rounding, $\Exv{u_t} = 0$.
And so,
\begin{align*}
  \Exv{ \norm{w_{t+1} - w^*}^4 }
  &=
  \Exv{ \norm{w_t - w^* - \alpha \nabla f(w_t) + u_t }^4 } \\
  &=
  \Exv{ \norm{w_t - w^* - \alpha \nabla f(w_t) }^4 }
  +
  \Exv{ 4 \norm{w_t - w^* - \alpha \nabla f(w_t) }^2
  u_t^T \left(w_t - w^* - \alpha \nabla f(w_t)\right) }
  \\&\hspace{2em}+
  \Exv{ 2 \norm{w_t - w^* - \alpha \nabla f(w_t) }^2 \norm{u_t}^2 }
  +
  \Exv{ 4 \left( u_t^T \left(w_t - w^* - \alpha \nabla f(w_t)\right) \right)^2 }
  \\&\hspace{2em}+
  \Exv{ 4 \norm{u_t}^2 u_t^T \left(w_t - w^* - \alpha \nabla f(w_t)\right) }
  +
  \Exv{ \norm{u_t}^4 } \\
  &=
  \Exv{ \norm{w_t - w^* - \alpha \nabla f(w_t) }^4 }
  +
  \Exv{ 2 \norm{w_t - w^* - \alpha \nabla f(w_t) }^2 \norm{u_t}^2 }
  \\&\hspace{2em}+
  \Exv{ 4 \left( u_t^T \left(w_t - w^* - \alpha \nabla f(w_t)\right) \right)^2 }
  +
  \Exv{ 4 \norm{u_t}^2 u_t^T \left(w_t - w^* - \alpha \nabla f(w_t)\right) }
  +
  \Exv{ \norm{u_t}^4 } \\
  &\le
  \Exv{ \norm{w_t - w^* - \alpha \nabla f(w_t) }^4 }
  +
  \Exv{ 6 \norm{w_t - w^* - \alpha \nabla f(w_t) }^2 \norm{u_t}^2 }
  \\&\hspace{2em}+
  \Exv{ 4 \norm{w_t - w^* - \alpha \nabla f(w_t) } \norm{u_t}^3 }
  +
  \Exv{ \norm{u_t}^4 }.
\end{align*}
From Holder's inequality, we know that
\[
  \Exv{X^m Y^n}
  \le
  \Exv{\Abs{X}^{m+n}}^{\frac{m}{m+n}} \Exv{\Abs{Y}^{m+n}}^{\frac{n}{m+n}},
\]
so
\begin{align*}
  \Exv{ \norm{w_{t+1} - w^*}^4 }
  &\le
  \Exv{ \norm{w_t - w^* - \alpha \nabla f(w_t) }^4 }
  +
  6 \Exv{ \norm{w_t - w^* - \alpha \nabla f(w_t) }^4 }^{\frac{1}{2}}
  \Exv{ \norm{u_t}^4 }^{\frac{1}{2}}
  \\&\hspace{2em}+
  4 \Exv{ \norm{w_t - w^* - \alpha \nabla f(w_t) }^4 }^{\frac{1}{4}}
  \Exv{ \norm{u_t}^4 }^{\frac{3}{4}}
  +
  \Exv{ \norm{u_t}^4 }.
\end{align*}
Now, it's a well known result for convex functions that
\begin{align*}
  \norm{w_t - w^* - \alpha \nabla f(w_t) }^2
  &=
  \norm{w_t - w^*}^2 
  - 
  2\alpha (w_t - w^*)^T \left( \nabla f(w_t) - \nabla f(w^*) \right)
  +
  \norm{ \nabla f(w_t) - \nabla f(w^*) }^2 \\
  &\le
  \norm{w_t - w^*}^2 
  - 
  2 \alpha \mu \norm{w_t - w^*}^2 
  +
  \alpha^2 L \norm{w_t - w^*}^2 \\
  &=
  (1 - 2 \alpha \mu + \alpha^2 L) \norm{w_t - w^*}^2,
\end{align*}
so
\[
  \norm{w_t - w^* - \alpha \nabla f(w_t) }^4
  \le
  (1 - 2 \alpha \mu + \alpha^2 L)^2 \norm{w_t - w^*}^4.
\]
On the other hand, we can bound the magnitude of $u$ with
\begin{align*}
  \norm{ u_t }
  &=
  \norm{ 
    w_t - \alpha \nabla \tilde f_t(w_t)
    -
    Q_{\delta}\left( w_t - \alpha \nabla \tilde f_t(w_t) \right)
    +
    \alpha \nabla \tilde f_t(w_t)
    -
    \alpha \nabla f(w_t)
    } \\
  &\le
  \norm{ 
    w_t - \alpha \nabla \tilde f_t(w_t)
    -
    Q_{\delta}\left( w_t - \alpha \nabla \tilde f_t(w_t) \right)
  }
  +
  \norm{
    \alpha \nabla \tilde f_t(w_t)
    -
    \alpha \nabla f(w_t)
  } \\
  &\le
  \delta \sqrt{d}
  +
  \alpha G.
\end{align*}
If we define $C = \delta \sqrt{d} + \alpha G$, then $\norm{ u_t } \le C$, and so
\begin{align*}
  \Exv{ \norm{w_{t+1} - w^*}^4 }
  &\le
  (1 - 2 \alpha \mu + \alpha^2 L)^2 \Exv{\norm{w_t - w^*}^4}
  +
  6 (1 - 2 \alpha \mu + \alpha^2 L) \Exv{\norm{w_t - w^*}^4}^{\frac{1}{2}}
  C^2
  \\&\hspace{2em}+
  4 \sqrt{1 - 2 \alpha \mu + \alpha^2 L} \cdot \Exv{\norm{w_t - w^*}^4}^{\frac{1}{4}}
  C^3
  +
  C^4 \\
  &\le
  (1 - 2 \alpha \mu + \alpha^2 L)^2 \Exv{\norm{w_t - w^*}^4}
  +
  6 C^2 \Exv{\norm{w_t - w^*}^4}^{\frac{1}{2}}
  +
  4 C^3 \Exv{\norm{w_t - w^*}^4}^{\frac{1}{4}}
  +
  C^4.
\end{align*}
Next, suppose we choose $\alpha$ small enough that $(1 - 2 \alpha \mu + \alpha^2 L)^2 \le 1 - 2 \alpha \mu$. Then,
\begin{align*}
  \Exv{ \norm{w_{t+1} - w^*}^4 }
  &\le
  (1 - 2 \alpha \mu) \Exv{\norm{w_t - w^*}^4}
  +
  6 C^2 \Exv{\norm{w_t - w^*}^4}^{\frac{1}{2}}
  +
  4 C^3 \Exv{\norm{w_t - w^*}^4}^{\frac{1}{4}}
  +
  C^4.
\end{align*}
Now, suppose that $\Exv{\norm{w_t - w^*}^4}$ is large enough that
\[
  \Exv{\norm{w_t - w^*}^4} \ge C^4.
\]
Then,
\begin{align*}
  \Exv{ \norm{w_{t+1} - w^*}^4 }
  &\le
  (1 - 2 \alpha \mu) \Exv{\norm{w_t - w^*}^4}
  +
  11 C^2 \Exv{\norm{w_t - w^*}^4}^{\frac{1}{2}}.
\end{align*}
And if we further suppose that $\Exv{\norm{w_t - w^*}^4}$ is large enough that
\[
  \alpha \mu \Exv{\norm{w_t - w^*}^4} \ge 11 C^2 \Exv{\norm{w_t - w^*}^4}^{\frac{1}{2}},
\]
which is equivalent to
\[
  \Exv{\norm{w_t - w^*}^4} \ge \left( \frac{11 C^2}{\alpha \mu} \right)^2,
\]
then
\begin{align*}
  \Exv{ \norm{w_{t+1} - w^*}^4 }
  &\le
  (1 - \alpha \mu) \Exv{\norm{w_t - w^*}^4}.
\end{align*}
So we've shown that $\Exv{ \norm{w_{t+1} - w^*}^4 }$ decreases exponentially until it no longer satisfies one of our two suppositions, which,  
as long as $\alpha \mu < 11$, will be
\[
  \Exv{\norm{w_t - w^*}^4} \ge \left( \frac{11 C^2}{\alpha \mu} \right)^2.
\]
In other words, independently of any suppositions on the magnitude of $\Exv{ \norm{w_{t+1} - w^*}^4 }$ (but still assuming $\alpha \mu < 11$), it will hold that
\begin{align*}
  \Exv{ \norm{w_{t+1} - w^*}^4 }
  &\le
  (1 - \alpha \mu) \max\left( \Exv{\norm{w_t - w^*}^4}, \left( \frac{11 C^2}{\alpha \mu} \right)^2 \right).
\end{align*}
And applying this recursively,
\begin{align*}
  \Exv{ \norm{w_T - w^*}^4 }
  &\le
  \max\left( (1 - \alpha \mu)^T \norm{w_0 - w^*}^4, \left( \frac{11 C^2}{\alpha \mu} \right)^2 \right) \\
  &\le
  \max\left( \exp(- \alpha \mu T) \norm{w_0 - w^*}^4, \left( \frac{11 C^2}{\alpha \mu} \right)^2 \right).
\end{align*}
It follows that, as long as $T$ is large enough that
\[
  \exp(- \alpha \mu T) \norm{w_0 - w^*}^4 \le \left( \frac{11 C^2}{\alpha \mu} \right)^2,
\]
which will occur when
\[
  T \ge \frac{2}{\alpha \mu} \log\left( \frac{\alpha \mu \norm{w_0 - w^*}^2}{11 C^2} \right),
\]
we will get 
\begin{align*}
  \Exv{ \norm{w_T - w^*}^4 }
  &\le
  \left( \frac{11 C^2}{\alpha \mu} \right)^2.
\end{align*}
Next, we simplify this fraction.
\begin{align*}
  \frac{C^2}{\alpha \mu}
  &=
  \frac{\left( \delta \sqrt{d} + \alpha G \right)^2}{\alpha \mu}.
\end{align*}
If we set $\alpha = G^{-1} \delta \sqrt{d}$ then
\begin{align*}
  \frac{C^2}{\alpha \mu}
  &=
  \frac{\left( \delta \sqrt{d} + G^{-1} \delta \sqrt{d} \cdot G \right)^2}{G^{-1} \delta \sqrt{d} \mu} \\
  &=
  \frac{\left( 2 \delta \sqrt{d} \right)^2 \cdot G}{\delta \sqrt{d} \mu} \\
  &=
  \frac{4 G \delta \sqrt{d}}{\mu}.
\end{align*}
It follows that our bound on $T$ becomes
\[
  T \ge \frac{2 G}{\mu \delta \sqrt{d}} \log\left( \frac{\mu \norm{w_0 - w^*}^2}{44 G \delta \sqrt{d}} \right),
\]
and our bound on $\Exv{ \norm{w_T - w^*}^4 }$ becomes
\begin{align*}
  \Exv{ \norm{w_T - w^*}^4 }
  &\le
  \left( \frac{44 G \delta \sqrt{d}}{\mu} \right)^2 \\
  &=
  \frac{44^2 G^2 \delta^2 d}{\mu^2}.
\end{align*}
This is what we wanted to prove.
\end{proof}

\newpage

\section{Proof for Theorem~\ref{thm:SWALP}}\label{sec:thm4.3}

\begin{customthm}{\ref{thm:SWALP}}
Suppose that we run SWALP under the above conditions, with the parameters specified in Lemma~\ref{lemmaNewSingleIterBound}.
Also, suppose that we first run a warm-up phase and start averaging at some point $w_0$ after enough iterations of low-precision SGD such that the bound of Lemma~\ref{lemmaNewSingleIterBound} is already guaranteed to apply for this and all subsequent iterates.
Let $\bar{w}$ be the output of SWALP using cycle length $c$, and $\gamma = min(\alpha^2\mu^2c^2,1)$.
The expected squared distance to the optimum of the output of SWALP is bounded by
\begin{align*}
% {\textstyle
    \mathbb{E}[\|\bar{w} - w^*\|^2] \leq
    \frac{5808 M^2G^2\delta^2 d}{\mu^4} + 
    \frac{6 G^2 c}{\mu^2 T} +
    \frac{528 \sqrt{d}\delta G^3 c^2}{\gamma\mu T^2}.
% }
\end{align*}
\end{customthm}

\begin{proof}
Consider the case that $\bar{w}$ is output after $K$ averagings. Therefore, we could assume $cK \leq T < cK+c$, and $\bar{w}=\frac{1}{K}\sum_{t=0}^{K-1} w_{tc}$.
From rearranging the SGD update step, we have
\begin{align*}
  w_t - w_{t+1}
  &=
  Q_{\delta}\left(\alpha \nabla \tilde f_t(w_t) \right) \\
  &=
  \alpha H (w_t - w^*)
  +
  \alpha \left(
    \nabla f(w_t)
    -
    H (w_t - w^*)
  \right)
  +
  Q_{\delta}\left(\alpha \nabla \tilde f_t(w_t) \right)
  -
  \alpha \nabla f(w_t)
\end{align*}
where we let $H$ denote $H = \nabla^2 f(w^*)$.
For simplicity of the presentation, we let
\begin{align*}
    \phi_t &= \nabla f(w_t) - H ( w_t - w^*) \\
    \psi_t &= Q_\delta(\alpha \nabla \tilde{f}_t(w_t)) - \alpha \nabla f(w_t) \\
    \eta_t &= -\alpha \left(\nabla f(w_t) - H (w_t - w^*) \right)
              - Q_{\delta}\left(\alpha \nabla \tilde f_t(w_t) \right)
              + \alpha \nabla f(w_t) \\
           &= \alpha \phi_t + \psi_t
\end{align*}
Then  we can re-arrange the terms as following:
\begin{align*}
    w_{t+1} &= w_t - \alpha H(w_t - w^*) - \eta_t \\
    w_{t+1} - w^* &= (w_t - w^*) - \alpha H ( w_t - w^* ) - \eta_t \\
                  &= (I - \alpha H) (w_t - w^*) - \eta_t \\
    w_{t+c} - w^* &= (I - \alpha H)^c ( w_t - w^*) - \sum_{i=0}^{c-1} \eta_{t+i}(I-\alpha H)^{c-u-1} \\
    w_t - w_{t+c} &= (I - (I - \alpha H)^c) ( w_t - w^*) + \sum_{i=0}^{c-1}(I - \alpha H)^{c-i-1}\eta_{t+i} \\
    w_0 - w_{cK} &= \sum_{t=0}^{K-1}(I - (I-\alpha H)^c)(w_{tc} - w^*) + \sum_{t=0}^{K-1} \sum_{i=0}^{c-1} (I - \alpha H)^{c-i-1} \eta_{ct+i} \\
    \frac{1}{K}(w_0 - w_{cK}) &= 
        \frac{1}{K}\sum_{t=0}^{K-1}(I - (I-\alpha H)^c)(w_{tc} - w^*) + 
        \frac{1}{K}\sum_{t=0}^{K-1} \sum_{i=0}^{c-1} (I - \alpha H)^{c-i-1} \eta_{ct+i} \\
        &= 
        (I - (I-\alpha H)^c) \frac{1}{K}\sum_{t=0}^{K-1}(w_{tc} - w^*) + 
        \frac{1}{K}\sum_{t=0}^{K-1} \sum_{i=0}^{c-1} (I - \alpha H)^{c-i-1} \eta_{ct+i} \\
\end{align*}
We know $\bar{w} = \frac{1}{K}\sum_{t=0}^{K-1} w_{tc}$ and $cK \leq T < cK+c$. So $\frac{1}{K}\sum_{t=0}^{K-1}(w_{tc} - w^*) = \bar{w} - w^*$. 
\begin{align*}
\bar{w} - w^* &= (I - (I - \alpha H)^c)^{-1} \left[ 
    \frac{1}{K}(w_0 - w_{cK}) + \frac{1}{K}\sum_{t=0}^{K-1}\sum_{i=0}^{c-1} (I-\alpha H)^{c-i-1} \eta_{ct+i}
\right]
\end{align*}

Bounding the expected norm of $\bar{w} - w^*$, we get
\begin{align*}
    \Exv{\|\bar{w} - w^*\|} & \leq \frac{1}{K}\Exv{\left\|(I-(I - \alpha H)^c)^{-1} (w_0 - w_{cK})\right\|} \\
                            & \quad + \frac{\alpha}{K} \Exv{\left\|\sum_{i=0}^{c-1} (I - (I-\alpha H)^c)^{-1} (I - \alpha H)^{c-i-1} \phi_{ct+i}\right\|} \\ 
                            & \quad + \Exv{\left\|\frac{1}{K}\sum_{t=0}^{K-1}\sum_{i=0}^{c-1}(I - (I - \alpha H)^c)^{-1} (I - \alpha H )^{c-i-1} \psi_{ct+i}\right\|}
\end{align*}
Applying the inequality $(x+y+z)^2 \leq 3x^2+3y^2+3z^2$, we get
\begin{align*}
    \Exv{\|\bar{w} - w^*\|^2} & \leq \frac{3}{K^2}\Exv{\left\|(I-(I - \alpha H)^c)^{-1} (w_0 - w_{cK})\right\|^2} \\
                            & \quad + 3 \alpha^2\left(\frac{1}{K} \Exv{\left\|\sum_{i=0}^{c-1} (I - (I-\alpha H)^c)^{-1} (I - \alpha H)^{c-i-1} \phi_{ct+i}\right\|}\right)^2 \\ 
                            & \quad + 3\Exv{\left\|\frac{1}{K}\sum_{t=0}^{K-1}\sum_{i=0}^{c-1}(I - (I - \alpha H)^c)^{-1} (I - \alpha H )^{c-i-1} \psi_{ct+i}\right\|^2}
\end{align*}
We will now bound each of these three terms separately, starting with the first term:
\begin{align*}
    \frac{3}{K^2}\Exv{\left\| (I - (I - \alpha H)^c)^{-1} ( w_0 - w_{cK}) \right\|^2} 
    & \leq \frac{3}{K^2}\left\| (I - (I - \alpha H)^c)^{-1}\right\|_2^2 \Exv{\|w_0 - w_{cK}\|^2}
\end{align*}
By the result of Lemma~\ref{lemmaNoiseBall}, and Jensen's inequality, we will have
\begin{align*}
    \Exv{\|w_t - w^*\|^2} \leq \Exv{\|w_t - w^*\|^4}^{\frac{1}{2}} \leq \frac{44G\delta \sqrt{d}}{\mu}
\end{align*}
for any fixed $t$ during the executation of the algorithm (after the warm-up period). It follows that
\begin{align*}
    \Exv{\|w_0 - w_K\|^2}] 
    &\leq 2\Exv{\|w_K - w^*\|^2} + 2 \Exv{\|w_0 - w^*\|^2} \\
    &\leq \frac{176 G \delta \sqrt{d}}{\mu \alpha^2 K^2} = \frac{176 G^3}{\mu \delta \sqrt{d} K^2}
\end{align*}
where the last line followed by substituting $\alpha = \sqrt{\frac{\delta^2 d}{G^2}}$ from Lemma~\ref{lemmaNoiseBall}.
Since $H \succeq \mu I$, so $\|H\|_2 \geq \mu$. We have
\begin{align*}
    \|(I - (I - \alpha H)^c)^{-1}\|_2^2 \leq  (1 - (1 - \alpha \mu)^c)^{-2}
\end{align*}
Therefore, we get the following bound on the first term:
\begin{align*}
    \frac{3}{K^2}\Exv{\left\| (I - (I - \alpha H)^c)^{-1} ( w_0 - w_{cK}) \right\|^2} 
    & \leq \frac{1}{K^2}(1-(1-\alpha \mu)^c)^{-2}\frac{176 G\delta \sqrt{d}}{\mu}
\end{align*}
Note that if c is really small (i.e. $\alpha \mu c << 1$), then
\begin{align*} 
    (1- (1- \alpha \mu)^c)^{-2} \leq \frac{1}{\alpha^2 \mu^2 c^2}
\end{align*}
If $c$ is large, then $(1-(1 - \alpha \mu)^c)^{-2} \rightarrow 1$, so we can bound it by 
\begin{align*}
    (1- (1- \alpha \mu)^c)^{-2} \leq \frac{1}{min(\alpha^2\mu^2c^2, 1)} = \frac{1}{\gamma}
\end{align*}
Therefore, the first term is then bounded by:
\begin{align*}
    \frac{3}{K^2}\Exv{\left\| (I - (I - \alpha H)^c)^{-1} ( w_0 - w_{cK}) \right\|^2} 
    & \leq \frac{528 G \delta \sqrt{d}}{\gamma \mu K^2}
\end{align*}

Now we proceed to bound the second term:
\begin{align*}
    & 3 \alpha^2\left(\frac{1}{K} \Exv{\left\|\sum_{i=0}^{c-1} (I - (I-\alpha H)^c)^{-1} (I - \alpha H)^{c-i-1} \phi_{ct+i}\right\|}\right)^2 \\
    &\quad \leq \frac{3\alpha^2}{K} \left( \sum_{t=0}^{K-1} \Exv{\left\|\sum_{i=0}^{c-1} (I - \alpha H)^{c-i-1} \phi_{ct+i}\right\|^2} \right) \left\| (I - (I - \alpha H)^c)^{-1} \right\|_2^2
\end{align*}
Analyzing the the expectation term, 
\begin{align*}
    & \Exv{\left\| \sum_{i=0}^{c-1} (I - \alpha H)^{c-i-1}\phi_{ct+i} \right\|^2} \\
    & \quad = \left( \sum_{i=0}^{c-1} (1 - \alpha \mu )^{c-i-1} \right)^2 \Exv{\left\| \frac{1}{\sum_{i=0}^{c-1}(1-\alpha \mu)^{c-i-1}}\sum_{i=0}^{c-1} \frac{(1-\alpha\mu)^{c-i-1}}{(1-\alpha\mu)^{c-i-1}}(I - \alpha H)^{c-i-1}\phi_{ct+i} \right\|^2} \\
    & \quad = \left( \sum_{i=0}^{c-1} (1 - \alpha \mu )^{c-i-1} \right)^2 
    \Exv{\left\| 
        \sum_{i=0}^{c-1} 
            \frac{(1-\alpha\mu)^{c-i-1}}{\sum_{i=0}^{c-1}(1-\alpha \mu)^{c-i-1}}
            \frac{(I - \alpha H)^{c-i-1}}{(1-\alpha\mu)^{c-i-1}}\phi_{ct+i} \right\|^2} \\
     & \quad \leq \left( \sum_{i=0}^{c-1} (1 - \alpha \mu )^{c-i-1} \right)^2 
        \sum_{i=0}^{c-1} \frac{(1-\alpha\mu)^{c-i-1}}{\sum_{i=0}^{c-1}(1-\alpha \mu)^{c-i-1}}
        \Exv{\left\| 
            \frac{(I - \alpha H)^{c-i-1}}{(1-\alpha\mu)^{c-i-1}}\phi_{ct+i} \right\|^2}
        \quad (\text{Jensen's inequality}) \\
    & \quad = \left( \sum_{i=0}^{c-1} (1 - \alpha \mu )^{c-i-1} \right)\sum_{i=0}^{c-1} (1 - \alpha \mu)^{c-i-1} \Exv{\left\|\frac{(I-\alpha H)^{c-i-1}}{(1-\alpha\mu)^{c-i-1}}\phi_{ct+i}\right\|^2} \\
    & \quad \leq \left( \sum_{i=0}^{c-1} (1 - \alpha \mu )^{c-i-1} \right)\sum_{i=0}^{c-1} (1 - \alpha \mu)^{c-i-1} \Exv{\left\|\phi_{ct+i}\right\|^2} \quad 
    \left(H \succeq \mu I \implies \left\| \frac{(I - \alpha H)^{c-i-1}}{(1 - \alpha \mu)^{c-i-1}} \right\|_2 \leq 1 \right)
\end{align*}

Now we need to bound $\Exv{\|\phi_t\|^2}$. We notice that by Taylor's theorem, for some $z$ on the segment between $w_t$ and $w^*$,
\begin{align*}
  \norm{
    \nabla f(w_t)
    -
    H (w_t - w^*)
  }
  &=
  \norm{
    \nabla^2 f(z) (w_t - w^*)
    - 
    H (w_t - w^*)
  } \\
  &=
  \norm{
    \left( \nabla^2 f(z) - \nabla^2 f(w^*) \right) (w_t - w^*)
  } \\
  &\le
  \norm{ \nabla^2 f(z) - \nabla^2 f(w^*) } \norm{ w_t - w^* } \\
  &\le
  M \norm{ z - w^* } \norm{ w_t - w^* } \\
  &\le
  M \norm{ w_t - w^* }^2
\end{align*}
where the matrix norm used here is the induced 2-norm, and $M$ is our bound on the Lipschitz continuity of the second derivative of $f$.
It follows that
\begin{align*}
  \Exv{ \norm{
    \nabla f(w_t) 
    -
    H (w_t - w^*)
  }^2 }
  &\le
  M^2 \Exv{ \norm{ w_t - w^* }^4 }.
\end{align*}
From here, we can again apply our bound from Lemma~\ref{lemmaNewSingleIterBound}, which gives us
\begin{align*}
  \Exv{ \norm{
    \nabla f(w_t)
    -
    H (w_t - w^*)
  }^2 }
  &\le
  M^2 \cdot \frac{44^2 G^2 \delta^2 d}{\mu^2}.
\end{align*}

Putting back, we get 
\begin{align*}
    \Exv{\left\| \sum_{i=0}^{c-1} (I - \alpha H)^{c-i-1}\phi_{ct+i} \right\|^2}  
    \leq \left( \sum_{i=0}^{c-1} (1 - \alpha \mu )^{c-i-1} \right)^2 \frac{44^2M^2G^2\delta^2d}{\mu^2}
\end{align*}

Right now the bound for the second term becomes:
\begin{align*}
    & 3 \alpha^2\left(\frac{1}{K} \Exv{\left\|\sum_{i=0}^{c-1} (I - (I-\alpha H)^c)^{-1} (I - \alpha H)^{c-i-1} \phi_{ct+i}\right\|}\right)^2 \\
    & \quad \leq 
        3\alpha^2d
        \frac{44^2M^2G^2\delta^2d}{\mu^2}
        \left( \sum_{i=0}^{c-1} (1 - \alpha \mu )^{c-i-1} \right)^2 
        \|(I - (I - \alpha H)^c)^{-1}\|_2^2
\end{align*}
We know that $\|(I - (I - \alpha H)^c)^{-1}\|_2^2 \leq (1 - (1 - \alpha \mu)^c)^{-2}$, and
\[
\left( \sum_{i=0}^{c-1} (1 - \alpha \mu)^{c-i-1} \right)^2 
= \left( \sum_{i=0}^{c-1} (1 - \alpha \mu)^i \right)^2 
= \left( \frac{1-(1-\alpha\mu)^c}{1-(1-\alpha\mu)} \right)^2 
= \frac{(1- (1- \alpha\mu)^c)^2}{\alpha^2\mu^2}
\]
therefore, we have
\begin{align*}
    \left( \sum_{i=0}^{c-1} (1 - \alpha \mu )^{c-i-1} \right)^2 \|(I - (I - \alpha H)^c)^{-1}\|_2^2 
    \leq \frac{1}{\alpha^2\mu^2}
\end{align*}
With this, we can obtain the final bound on the second term substituting $\alpha = \sqrt{\frac{\delta^2 d}{G^2}}$
\[
    3 \alpha^2\left(\frac{1}{K} \Exv{\left\|\sum_{i=0}^{c-1} (I - (I-\alpha H)^c)^{-1} (I - \alpha H)^{c-i-1} \phi_{ct+i}\right\|}\right)^2 
    \leq 
    \frac{3\cdot 44^2 M^2 G^2 \delta^2 d}{\mu^4}
\]

Now we start analyzing the third term:
\begin{align*}
    & 3\Exv{\left\|\frac{1}{K}\sum_{t=0}^{K-1}\sum_{i=0}^{c-1}(I - (I - \alpha H)^c)^{-1} (I - \alpha H )^{c-i-1} \psi_{ct+i}\right\|^2} \\
    & \quad \leq 3\norm{ (I - (I - \alpha H)^c)^{-1} }_2^2 \Exv{\left\|\frac{1}{K}\sum_{t=0}^{K-1}\sum_{i=0}^{c-1} (I - \alpha H )^{c-i-1} \psi_{ct+i}\right\|^2} \\
    & \quad = 
    \frac{3}{K^2} \norm{ (I - (I - \alpha H)^c)^{-1} }_2^2 \sum_{t=0}^{K-1}\sum_{i=0}^{c-1} \Exv{\norm{ (I - \alpha H )^{c-i-1} \psi_{ct+i} }}
\end{align*}
The last equality is because all the cross term has expectation $0$ as $\Exv{\psi_t} = 0$ by using unbiased quantization $Q_\delta$. Following this line of analysis:
\begin{align*}
    & 3\Exv{\left\|\frac{1}{K}\sum_{t=0}^{K-1}\sum_{i=0}^{c-1}(I - (I - \alpha H)^c)^{-1} (I - \alpha H )^{c-i-1} \psi_{ct+i}\right\|^2} \\
    & \quad \leq 
    \frac{3}{K^2} \norm{ (I - (I - \alpha H)^c)^{-1} }_2^2 \sum_{t=0}^{K-1}\sum_{i=0}^{c-1} \norm{(I - \alpha H )^{c-i-1}}_2^2\Exv{\norm{  \psi_{ct+i} }} \\
    & \quad \leq 
    \frac{3}{K^2} (1-(1-\alpha\mu)^c)^{-2} \sum_{t=0}^{K-1}\sum_{i=0}^{c-1} ((1-\alpha\mu)^{c-i-1})^2 \Exv{\norm{  \psi_{ct+i} }}
\end{align*}
where the last line is followed by $\|(I - (I - \alpha H)^c)^{-1}\|_2^2 \leq (1-(1-\alpha\mu)^c)^{-2}$ and $\norm{ (I - \alpha H)^t }_2^2 \leq (1 - \alpha \mu)^{2t}$. 
Now we will bound $\Exv{\norm{ \psi_t }^2}$ as follow:
\begin{align*}
  \Exv{\norm{\psi_t}^2}
  &= \Exv{ \norm{
    Q_{\delta}\left(\alpha \nabla \tilde f_t(w_t) \right)
    -
    \alpha \nabla f(w_t)
  }^2 } \\
  &=
  \Exv{ \norm{
    Q_{\delta}\left(\alpha \nabla \tilde f_t(w_t) \right) 
    -
    \alpha \nabla \tilde f_t(w_t)
    +
    \alpha \nabla \tilde f_t(w_t)
    -
    \alpha \nabla f(w_t)
  }^2 } \\ 
  &=
  \Exv{ \norm{
    Q_{\delta}\left(\alpha \nabla \tilde f_t(w_t) \right)
    -
    \alpha \nabla \tilde f_t(w_t)
  }^2 }
  +
  \Exv{ \norm{
    \alpha \nabla \tilde f_t(w_t)
    -
    \alpha \nabla f(w_t)
  }^2 } \\
  &=
  \delta^2 d
  +
  \alpha^2 G^2 
  =
  2 \delta^2 d,
\end{align*}
where the last line applies $\alpha = \sqrt{\frac{\delta^2d}{G^2}}$. Putting this back, we have
\begin{align*}
    & 3\Exv{\left\|\frac{1}{K}\sum_{t=0}^{K-1}\sum_{i=0}^{c-1}(I - (I - \alpha H)^c)^{-1} (I - \alpha H )^{c-i-1} \psi_{ct+i}\right\|^2} \\
    & \quad \leq 
    \frac{3}{K^2} 2 \delta^2 d (1-(1-\alpha\mu)^c)^{-2} \sum_{t=0}^{K-1}\sum_{i=0}^{c-1} ((1-\alpha\mu)^{c-i-1})^2 \\
    & \quad \leq 
    \frac{6 \delta^2 d}{K}  (1-(1-\alpha\mu)^c)^{-2} \sum_{i=0}^{c-1} (1-\alpha\mu)^{2i} \\
    & \quad \leq 
    \frac{6 \delta^2 d}{K}  (1-(1-\alpha\mu)^c)^{-2} \left(\sum_{i=0}^{c-1} (1-\alpha\mu)^{i}\right)^2 
\end{align*}
Since $\alpha\mu < 1$ by Lemma~\ref{lemmaNoiseBall}. We know that $\sum_{i=0}^{c-1} (1-\alpha\mu)^i = \frac{1-(1-\alpha\mu)^c}{1-(1-\alpha\mu)}$, therefore the final bound for this term is:
\begin{align*}
    & 3\Exv{\left\|\frac{1}{K}\sum_{t=0}^{K-1}\sum_{i=0}^{c-1}(I - (I - \alpha H)^c)^{-1} (I - \alpha H )^{c-i-1} \psi_{ct+i}\right\|^2}  \leq \frac{6 \delta^2 d}{K\alpha^2\mu^2} = \frac{6G^2}{\mu^2K}
\end{align*}

Putting the three bounds together, we get
\begin{align*}
    \Exv{\|\bar{w} - w^*\|^2} \leq\frac{3\cdot 44^2 M^2 G^2 \delta^2 d}{\mu^4} + \frac{6G^2}{\mu^2K}+ \frac{528 G \delta \sqrt{d}}{\gamma \mu K^2} 
\end{align*}

Recall that $cK \leq T < cK+c$, we could replace $K$ with $T$ in the abovementioned bounds :
\begin{align*}
    \Exv{\|\bar{w} - w^*\|^2} \leq\frac{3\cdot 44^2 M^2 G^2 \delta^2 d}{\mu^4} + \frac{6G^2c}{\mu^2T}+ \frac{528 G \delta \sqrt{d}c^2}{\gamma \mu T^2} 
\end{align*}
which is what we want to prove.

\end{proof}

\section{Proof for Low-precision SGD Lower Bound}\label{sec:thm-lowerbound}
In previous work~\cite{training-quantized-network-deeper-understanding}, there were bounds on the size of the noise ball for low-precision SGD that looked like
\[
  \Exv{f(w_T) - f(w^*)} = O(\delta),
\]
where $\delta$ is the \emph{quantization gap} of the low-precision format chosen.
In our paper, SWALP, we found that doing weight averaging allows us to achieve something like
\[
  \Exv{f(w_T) - f(w^*)} = O(\delta^2).
\] 
which seemed to be a better convergence rate.
In order to show that it is actually better, we would need a lower bound on the noise ball size for low-precision SGD, not just an upper bound.
In this section, we will derive such a lower bound.

\begin{lemma}
  \label{lemmaCzbeta}
  There exists a constant $C > 0$ independent of problem parameters such that for all $z \in \R$ and for all $\beta > 0$,
  \[
    \mathbb{E}_{u \sim \mathcal{N}(0,1)}\left[\left( Q(z + \beta u) - (z + \beta u) \right)^2 \right]
    \ge
    C \cdot \min(1, \beta).
  \]
  Note that here, $Q$ with no subscript refers to random quantization onto the integers (i.e. $\delta = 1$).
\end{lemma}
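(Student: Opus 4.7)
The plan is to reduce the lemma to bounding the expected value of the periodic function $\rho(x) = \{x\}(1-\{x\})$ (where $\{x\} = x - \lfloor x \rfloor$) under the shifted Gaussian $z + \beta u$, and then to analyze this via a Fourier series.

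First, I would apply the stochastic rounding definition from equation~(\ref{eq:fixed-point-SR}) with $\delta = 1$ pointwise: for any fixed $y \in \mathbb{R}$ (and ignoring clipping, since the lemma is stated without a range),
\[
\mathbb{E}_Q[(Q(y) - y)^2] = (1-\{y\})\{y\}^2 + \{y\}(1-\{y\})^2 = \{y\}(1-\{y\}) = \rho(y),
\]
so by the tower property the left-hand side of the lemma equals $\mathbb{E}_u[\rho(z + \beta u)]$, reducing the statement to a lower bound on this quantity.

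Next I would invoke the classical Fourier expansion (which comes from the Fourier series for the Bernoulli polynomial $B_2$)
\[
\rho(x) = \tfrac{1}{6} - \tfrac{1}{\pi^2} \sum_{k=1}^{\infty} \tfrac{\cos(2\pi k x)}{k^2}.
\]
Since this series converges absolutely and uniformly, I can swap it with the expectation; using the Gaussian characteristic function $\mathbb{E}[e^{i t u}] = e^{-t^2/2}$ (whose real part yields $\mathbb{E}[\cos(2\pi k \beta u)] = e^{-2\pi^2 k^2 \beta^2}$, and whose imaginary part vanishes by symmetry), I obtain
\[
\mathbb{E}_u[\rho(z + \beta u)] = \tfrac{1}{6} - \tfrac{1}{\pi^2} \sum_{k=1}^{\infty} \tfrac{e^{-2\pi^2 k^2 \beta^2} \cos(2\pi k z)}{k^2}.
\]
Because every coefficient $e^{-2\pi^2 k^2 \beta^2}/k^2$ is positive, the minimum over $z$ occurs when every cosine equals one, i.e.\ at $z = 0$, so the problem reduces to lower bounding the single-variable function
\[
h(\beta) = \tfrac{1}{\pi^2}\sum_{k=1}^{\infty} \tfrac{1 - e^{-2\pi^2 k^2 \beta^2}}{k^2}
\]
by $C \min(1, \beta)$.

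Each summand in $h$ is nondecreasing in $\beta$, so $h$ is too, and hence $h(\beta) \ge h(1) > 0$ whenever $\beta \ge 1$; this handles the large-$\beta$ regime with a concrete numerical constant. For $\beta < 1$, I would restrict the sum to indices $k \ge 1/(\pi\beta)$, where $1 - e^{-2\pi^2 k^2 \beta^2} \ge 1 - e^{-2}$, and then estimate the tail of $\sum k^{-2}$ by an integral comparison, giving $\sum_{k \ge 1/(\pi\beta)} k^{-2} = \Omega(\beta)$. Taking $C$ to be the smaller of the two constants from the two regimes completes the proof. The main obstacle is precisely this small-$\beta$ tail analysis: keeping only the $k=1$ Fourier mode would give $h(\beta) = \Omega(\beta^2)$, which is too weak, so one must aggregate the contributions from all frequencies $k \gtrsim 1/\beta$ in order to recover the correct linear-in-$\beta$ rate.
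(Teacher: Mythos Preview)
Your proposal is correct, but it takes a genuinely different route from the paper. Both proofs begin the same way, reducing to $\Exv{\rho(z+\beta u)}$ with $\rho(x)=\{x\}(1-\{x\})$, and both invoke the Gaussian characteristic function. After that they diverge. The paper uses only the single-mode lower bound $\rho(x)\ge \tfrac18(1-\cos 2\pi x)$ to show $\Phi(\beta,z)\ge \tfrac18(1-\cos(2\pi z)e^{-2\pi^2\beta^2})$, which handles every $(\beta,z)$ except the degenerate corner $(\beta,z)\to(0,\mathbb Z)$. For that corner it computes directly, via dominated convergence on the integral form, the limit $\lim_{\beta\to 0}\Phi(\beta,0)/\beta=\Exv{|u|}=\sqrt{2/\pi}$, and then concludes by a soft compactness argument (extreme value theorem on the compactified domain) that $\Phi(\beta,z)/\min(1,\beta)$ attains a strictly positive minimum $C$. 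Your approach instead expands $\rho$ in its full Fourier series, uses positivity of all the coefficients $e^{-2\pi^2k^2\beta^2}/k^2$ to reduce exactly to $z=0$, and then extracts the linear-in-$\beta$ rate for small $\beta$ by summing the high-frequency modes $k\gtrsim 1/\beta$ (where $1-e^{-2\pi^2k^2\beta^2}\ge 1-e^{-2}$) and bounding the resulting tail $\sum_{k\ge 1/(\pi\beta)}k^{-2}\gtrsim \beta$ by integral comparison. Your route is more quantitative and yields an explicit constant without any compactness appeal; the paper's route needs less Fourier machinery but only certifies existence of $C$. Your diagnosis that the $k=1$ mode alone gives only $\Omega(\beta^2)$ is exactly the reason the paper does \emph{not} rely on its one-mode bound for the small-$\beta$ regime and instead switches to the direct limit computation.
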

\begin{proof}
  First, note that for any $x$, 
  \begin{align*}
    \Exv{ \left( Q(x) - x \right)^2 }
    &=
    \left( \lceil x \rceil - x \right)^2 \cdot \left( x - \lfloor x \rfloor \right)
    +
    \left( x - \lfloor x \rfloor \right)^2 \cdot \left( \lceil x \rceil - x \right) \\
    &=
    \left( \lceil x \rceil - x \right) \cdot \left( x - \lfloor x \rfloor \right) \left(
      \left( \lceil x \rceil - x \right)
      +
      \left( x - \lfloor x \rfloor \right)
    \right) \\
    &=
    \left( \lceil x \rceil - x \right) \cdot \left( x - \lfloor x \rfloor \right).
  \end{align*}
  So, we can re-write this as
  \[
  \mathbb{E}_{u \sim \mathcal{N}(0,1)}\left[ \left( Q(z + \beta u) - (z + \beta u) \right)^2 \right]
    =
    \mathbb{E}_{u \sim \mathcal{N}(0,1)}\left[ \left( \lceil z + \beta u \rceil - (z + \beta u) \right) \cdot \left( (z + \beta u) - \lfloor z + \beta u \rfloor \right) \right].
  \]
  Next, define the function
  \[
    \Phi(\beta, z)
    =
    \Exvud{u \sim \mathcal{N}(0,1)}{ \left( \lceil z + \beta u \rceil - (z + \beta u) \right) \cdot \left( (z + \beta u) - \lfloor z + \beta u \rfloor \right) }.
  \]
  Clearly, $\Phi$ is continuous.
  It is not difficult to see that
  \[
    \left( \lceil x \rceil - x \right) \cdot \left( x - \lfloor x \rfloor \right)
    \ge
    \frac{1}{4} \sin^2(\pi x)
    =
    \frac{1 - \cos(2 \pi x)}{8}.
  \]
  Therefore,
  \begin{align*}
    \Phi(\beta, z)
    &\ge 
    \Exvud{u \sim \mathcal{N}(0,1)}{ \frac{1 - \cos(2 \pi (z + \beta u))}{8} } \\
    &=
    \frac{1}{8} \Exvud{u \sim \mathcal{N}(0,1)}{ 1 - \cos(2 \pi z) \cos(2 \pi \beta u) + \sin(2 \pi z) \sin(2 \pi \beta u) } \\
    &=
    \frac{1}{8} \Exvud{u \sim \mathcal{N}(0,1)}{ 1 - \cos(2 \pi z) \cos(2 \pi \beta u) },
  \end{align*}
  where the last equality holds because $\sin$ is an odd function and the standard Normal distribution is even.
  Next, notice that
  \begin{align*}
    \Exvud{u \sim \mathcal{N}(0,1)}{ \cos(2 \pi \beta u) }
    &= 
    \Exvud{u \sim \mathcal{N}(0,1)}{ \frac{ \exp(i 2 \pi \beta u) +  \exp(-i 2 \pi \beta u) }{2} } \\
    &=
    \frac{ \phi_{\mathcal{N}}(2 \pi \beta) + \phi_{\mathcal{N}}(-2 \pi \beta)}{2},
  \end{align*}
  where $\phi_{\mathcal{N}}$ is the characteristic function of $\mathcal{N}(0,1)$, and is defined as
  \[
    \phi_{\mathcal{N}}(t) = \Exvud{u \sim \mathcal{N}(0,1)}{ \exp(i t u) }.
  \]
  The characteristic function of a standard Normal distribution is known to be
  \[
    \phi_{\mathcal{N}}(t) = \exp\left(-\frac{t^2}{2}\right),
  \]
  so
  \begin{align*}
    \Exvud{u \sim \mathcal{N}(0,1)}{ \cos(2 \pi \beta u) }
    &=
    \frac{ \exp(-2 \pi^2 \beta^2) + \exp(-2 \pi^2 \beta^2)}{2}
    =
    \exp(-2 \pi^2 \beta^2).
  \end{align*}
  Substituting this into our bound above,
  \begin{align*}
    \Phi(\beta, z)
    &\ge
    \frac{1}{8} \left( 1 - \cos(2 \pi z) \exp(-2 \pi^2 \beta^2) \right).
  \end{align*}
  This bound shows that $\Phi(\beta, z)$ is bounded away from zero everywhere except when $\beta = 0$ and $\cos(2 \pi z) = 1$.
  Since this expression is periodic in $z$, it suffices to consider just the case of $z = 0$.
  When $z = 0$, we have
  \[
    \Phi(\beta, 0)
    =
   \Exvud{u \sim \mathcal{N}(0,1)}{ \left( \lceil \beta u \rceil - \beta u \right) \cdot \left( \beta u - \lfloor \beta u \rfloor \right) }. 
  \]
  Expressing this as an integral, where $\psi$ is the probability density function of the standard normal distribution,
  \[
    \Phi(\beta, 0)
    =
    \int_{-\infty}^{\infty}
    \left( \lceil \beta u \rceil - \beta u \right) \cdot \left( \beta u - \lfloor \beta u \rfloor \right)
    \cdot \psi(u) \; du.
  \]
  Dividing both sides by $\beta$ and taking the limit as $\beta \rightarrow 0$,
  \begin{align*}
    \lim_{\beta \rightarrow 0}
    \frac{\Phi(\beta, 0)}{\beta}
    &=
    \lim_{\beta \rightarrow 0}
    \frac{1}{\beta}
    \int_{-\infty}^{\infty}
    \left( \lceil \beta u \rceil - \beta u \right) \cdot \left( \beta u - \lfloor \beta u \rfloor \right)
    \cdot \psi(u) \; du \\
    &=
    \int_{-\infty}^{\infty}
    \left(
      \lim_{\beta \rightarrow 0}
      \frac{1}{\beta} 
      \left( \lceil \beta u \rceil - \beta u \right) \cdot \left( \beta u - \lfloor \beta u \rfloor \right)
    \right)
    \cdot \psi(u) \; du \\
    &=
    \int_{-\infty}^{\infty}
    \Abs{u}
    \cdot \psi(u) \; du \\
    &=
    \sqrt{\frac{2}{\pi}}.
  \end{align*}
  Here, we can interchange the limit and integral by Lebesgue's Dominated Convergence Theorem.
  Now, since
  \[
    \frac{\Phi(\beta, z)}{\min(1, \beta)}
  \]
  is a continuous function, it follows from the extreme value theorem that it must attain its minimum value somewhere in its domain (reasoning about $z$ as living in a compact space since it is periodic, and reasoning about $\beta$ as living in a compact space with a point at infinity).
  But we've shown that at every point in its domain, this function is positive.
  So it follows that its minimum value must also be some number greater than zero.
  Call this number $C$.
  Then, it follows that
  \[
    \Exvud{u \sim \mathcal{N}(0,1)}{ \left( Q(z + \beta u) - (z + \beta u) \right)^2 }
    \ge
    C \cdot \min(1, \beta)
  \]
  which is what we wanted to prove.
\end{proof}

\begin{customthm}{\ref{thm:lower-bound}}\label{thm:lower-bound-appendix}
Consider one-dimensional objective function $f(x) = \frac{1}{2}x^2$ with gradient samples $\tilde{f}'(w) = w + \sigma u$ where $u \sim \mathcal{N}(0,1)$. Compute $w_T$ recursively using the quantized SGD updating step : $w_{t+1} = Q_\delta(w_t - \alpha \tilde{f}'(w_t))$. There exists a constant $A > 0$ such that for all step size $\alpha > 0$, 
$\lim_{T\to \infty} \mathbb{E}[w_T^2] \geq \sigma \delta A$.
\end{customthm}

\begin{proof}
  Let $\tilde{f}'(w_t) = w_t + \sigma u_t$ where $u_t\sim \mathcal{N}(0,1)$.
  Computing the expected value of this squared at the next time-step,
\begin{align*}
  \Exv{w_{t+1}^2}
  &=
  \Exv{ \left( Q_{\delta}\left((1 - \alpha) w_t + \alpha \sigma \tilde u_t \right) \right)^2} \\
  &=
  \Exv{ \left( (1 - \alpha) w_t + \alpha \sigma \tilde u_t \right)^2}
  +
  \Exv{ \left( Q_{\delta}\left((1 - \alpha) w_t + \alpha \sigma \tilde u_t \right) - \left((1 - \alpha) w_t + \alpha \sigma \tilde u_t \right)\right)^2} \\
  &=
  (1 - \alpha)^2 \Exv{ w_t^2 } + \alpha^2 \sigma^2
  +
  \delta^2
  \Exv{ \left( Q\left(\frac{(1 - \alpha) w_t + \alpha \sigma \tilde u_t}{\delta} \right) - \left(\frac{(1 - \alpha) w_t + \alpha \sigma \tilde u_t}{\delta} \right)\right)^2},
\end{align*}
where our reduction in the last line follows because $\tilde u_t$ is a standard normal random variable and so $\Exv{\tilde u_t^2} = 1$.
Applying Lemma~\ref{lemmaCzbeta} to bound the last term, we get
\begin{align*}
  \Exv{w_{t+1}^2}
  &\ge
  (1 - \alpha)^2 \cdot \Exv{ w_t^2 } + \alpha^2 \sigma^2
  +
  C \delta^2 \cdot \min\left(1, \frac{\alpha \sigma}{\delta} \right),
\end{align*}
Let $W$ denote the fixed point of this expression.
Then
\[
  W
  =
  (1 - \alpha)^2 \cdot W + \alpha^2 \sigma^2
  +
  C \delta^2 \cdot \min\left(1, \frac{\alpha \sigma}{\delta} \right),
\]
and so
\begin{align*}
  W
  &=
  \frac{\alpha^2 \sigma^2}{2\alpha - \alpha^2}
  +
  \frac{C \delta^2}{2 \alpha - \alpha^2} \cdot \min\left(1, \frac{\alpha \sigma}{\delta} \right) \\
  &=
  \frac{\alpha \sigma^2}{2 - \alpha}
  +
  \min\left(\frac{C \delta^2}{2 \alpha - \alpha^2}, \frac{\sigma C \delta}{2 - \alpha} \right).
\end{align*}
If $0 < \alpha < 2$ (that is, $\alpha$ is actually small enough that the SGD is stable), then we can bound this with
\begin{align*}
  W
  &\ge
  \frac{\alpha \sigma^2}{2}
  +
  \min\left(\frac{C \delta^2}{2 \alpha}, \frac{\sigma C \delta}{2} \right) \\
  &=
  \min\left(
    \frac{\alpha \sigma^2}{2} + \frac{C \delta^2}{2 \alpha}
  , 
    \frac{\alpha \sigma^2}{2} + \frac{\sigma C \delta}{2}
  \right) \\
  &\ge
  \min\left(
    \sigma \delta \sqrt{C}
  , 
    \frac{\sigma \delta C}{2}
  \right),
\end{align*}
where this last expression comes from minimizing each of the terms individually with respect to $\alpha$, subject to $\alpha > 0$.
If we define the problem-independent constant
\[
  A = \min\left(
    \sqrt{C}
  , 
    \frac{C}{2}
  \right),
\]
then we get
\[
  W \ge \sigma \delta A.
\]
Returning to our bound on the squares of the iterates, and subtracting the fixed point from both sides,
\begin{align*}
  \Exv{w_{t+1}^2 - \sigma \delta A}
  &\ge
  (1 - \alpha)^2 \cdot \Exv{ w_t^2 - \sigma \delta A}.
\end{align*}
Applying this inductively, and taking the limit, we get
\[
  \lim_{T \rightarrow \infty}
  \Exv{w_T^2}
  \ge
  \sigma \delta A
  =
  O(\delta).
\]

\end{proof}
This is a lower bound that proves we can't in general do better than $O(\delta)$ for low-precision SGD, even for strongly convex models.

\newpage

\section{Combining SWALP with low-precision training method}\label{sec:wage-swalp}
In Section~\ref{sec:related}, we introduce SWALP as an orthogonal approach to recent low-precision training algorithms. In this section, we explore the possibility of combining SWALP with a state-of-the-art low-precision model, WAGE~\cite{WAGE}. 
We evaluate the performance of SWALP and that of the modified low-precision SGD by training the WAGE network on CIFAR10 \cite{WAGE}. 
Originally, WAGE is trained with learning rate $8$ for the first 200 epochs and decay to $1$ at the 200 epochs, and to $0.125$ at the 250 epochs. Based on a held-out validation set, we find the learning rate schedule for SWALP. 
Specifically, we use a constant learning rate $8$ for the first 200 epochs and start SWALP at 250 epochs with a constant SWA learning rate $6$ and averaging frequency $c=1$. 
We keep other hyper-parameters and training procedures unchanged.
During inference, we use the full precision average as the weight in forward propagation and keep activations in 8 bits by following WAGE's quantization scheme.
We denote the WAGE network trained with SWALP, WAGE-SWALP, and report its test error rate on CIFAR10 in Table~\ref{tab:wage-swalp}. 
Due to the customized scaling in WAGE, it is nontrivial to train WAGE with full precision and we therefore omit the comparison with full precision baseline. 
Table~\ref{tab:wage-swalp} shows that SWALP can be combined with the existing low-precision training algorithm positively. We also note that SWALP requires little modification on the network structure and hyper-parameters except a different learning rate schedule.

\begin{table}[H]
    \centering
    \captionsetup{justification=centering}
    \caption{Test Error (\%) on CIFAR10. SWALP has positive interaction with state-of-the-art low-precision training algorithm.}
    \begin{tabular}{cc}
        \toprule
        Model & Test Error  \\
        \midrule
        WAGE~\cite{WAGE} & 6.78 \\
        WAGE-SWALP & 6.35 $\pm{0.04}$ \\
        \bottomrule
    \end{tabular}\label{tab:wage-swalp}
\end{table}

\section{Implementation Details for Linear Regression}\label{sec:linreg}

\textbf{Objective Function.} 
In this experiment, we define the objective function as $f(w) = \frac{1}{n}\sum_{i=1}^n (w^Tx_i - y_i)^2$ and $\tilde{f}(w) = (w^Tx_i - y_i)^2$ for a randomly sampled $i$.

\textbf{Synthetic Dataset.}
The data points $x_i$ were generated by $x_i \sim \mathcal{N}(0, \sigma_x^2I)$. 
We then chose target weights $w_{\text{init}}$ uniformly at random in the range $[-1, 1]$, and generated the labels by $y_i \sim \mathcal{N}(w_{init}^T x_i, \sigma_u^2)$.
To generate the synthetic dataset, we set $d = 256$ and $\sigma_u = \sigma_x = 1$ and sampled $4096$ data points.

\begin{figure*}[t]
    \centering
    \subfigure[Log-log plot of linear regression]{
        \centering
        \includegraphics[width=0.4\textwidth]{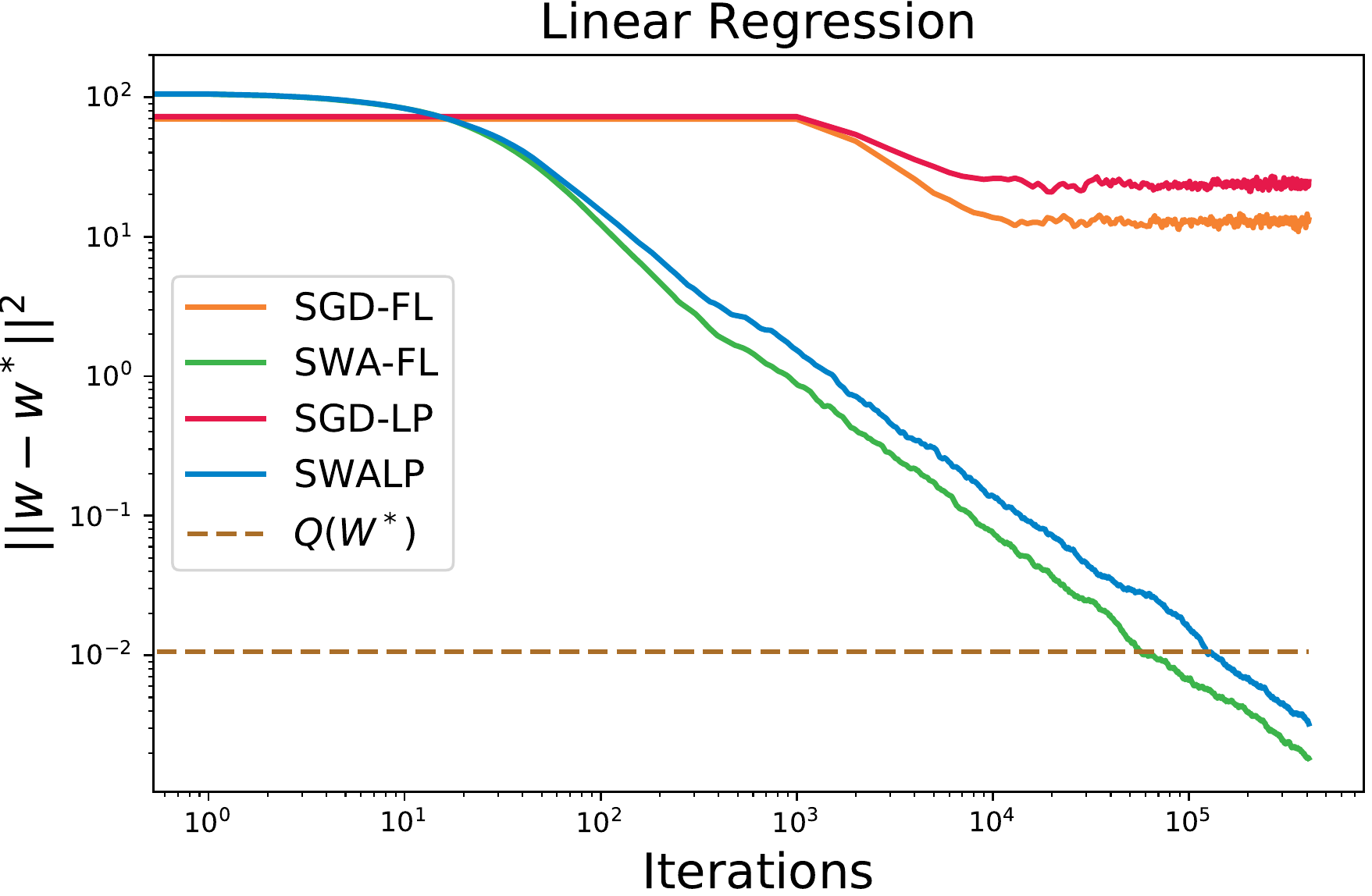}\label{fig:log-log}
    }
    \subfigure[MNIST test error (\%) for logistic regression experiments. We varied the precisions used to train SGD-LP and SWALP.]{
        \centering
        \includegraphics[width=0.45\textwidth]{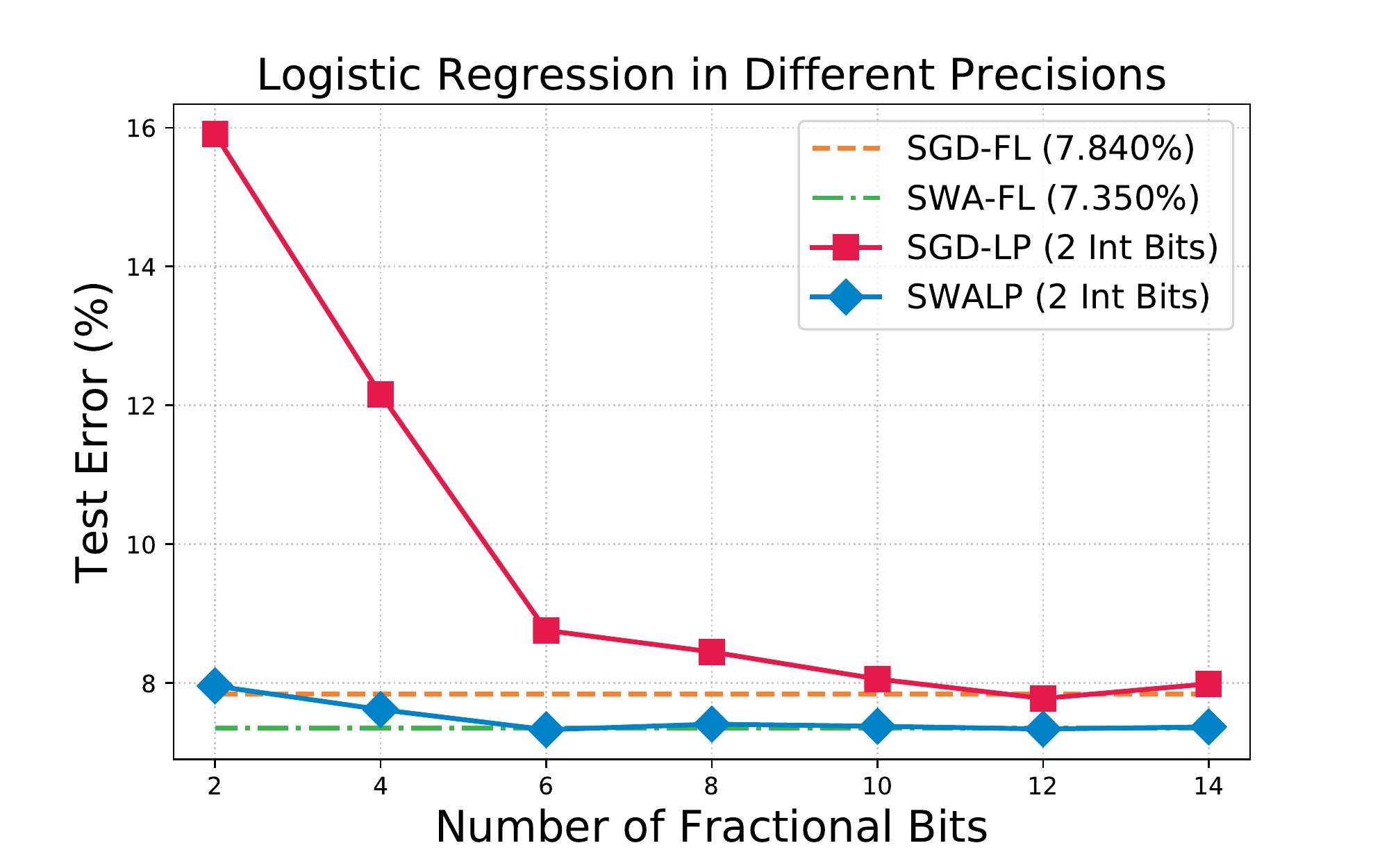}\label{fig:logreg-diff-prec-test}
    }
    \caption{More experimental results for linear and logistic regression.}
\end{figure*}

\textbf{Convergence.}
We plot the square distance between $w_t$ (or $\bar w_t$ for SWALP) and the optimal solution $w^*$ in log-log scale in Figure~\ref{fig:log-log}.
For reference, we also plot the squared distance between $Q(w^*)$ and $w^*$ to illustrate the size of quantization noise. It can be seen more clearly that the convergence rate is about $O(1/T)$.

\section{Implementation Details for Logistic Regression}\label{sec:logreg}

We use logistic regression with L2 regularization on MNIST dataset ~\cite{MNIST} .
In this experiment, we use the MNIST dataset~\cite{MNIST}. 
The objective function is defined as
$
    f(w) = - \frac{1}{n}\sum_{i=1}^n \log{(\operatorname{softmax}(w^Tx_i + b))} + \frac{\lambda}{2} \|w\|^2
$.
We choose $\lambda=10^{-4}$, a regularization parameter used in prior work~\cite{HALP,SVRG}, which makes the objective a strongly convex function with $M\ne 0$.
We use a learning rate of $\alpha = 0.01$ and cycle length $C=1$ for all four settings.
For this experiment we measure the norm of gradient at each iteration to illustrate the convergence of the algorithm; this is a more useful metric because MNIST is sparse and poorly conditioned, and it is a metric that has been used for logistic regression on MNIST in previous work~\cite{HALP}.
We warm up for 60000 iterations. 
For SWALP and SGD-LP, we use 4-bit word length and 2-bit fractional length.
For the experiment where we evaluate different precision, we use the same hyper-parameters, except the following: 1) the warm-up period is set to 600k iterations (i.e. 10 epochs); and 2) we report the final evaluation results after training for 3 million steps (i.e. 50 epochs). The test results is reported in Figure~\ref{fig:logreg-diff-prec-test}. As we could see the same conclusion still holds. Detail statistics is reported in Table~\ref{table:logreg-diffprec-stats}.

\begin{table}[H]
\centering
\caption{MNIST training and testing error (\%) for logistic regression experiment with different fractional bits for training.} 
\begin{tabular}{lccccc}

\toprule
 & & \multicolumn{2}{c}{SGD} & \multicolumn{2}{c}{SWA} \\
Format & Precision & Train Err   & Test Err  & Train Err   & Test Err  \\
\midrule
Float & 32 & 7.07 & 7.84 & 6.6 & 7.35 \\
\multirow{7}{*}{Fixed Point} & FL=14, WL=16 & 7.30 & 7.99 & 6.59 & 7.37 \\
 & FL=12, WL=14 & 7.18 & 7.78 & 6.59 & 7.34 \\
 & FL=10, WL=12 & 7.21 & 8.06 & 6.57 & 7.38 \\
 & FL=8, WL=10 & 7.82 & 8.45 & 6.61 & 7.41 \\
 & FL=6, WL=8 & 8.31 & 8.76 & 6.78 & 7.33 \\
 & FL=4, WL=6 & 11.64 & 12.16 & 7.21 & 7.62 \\
 & FL=2, WL=4 & 16.27 & 15.91 & 7.98 & 7.96 \\
\bottomrule
\end{tabular}
\label{table:logreg-diffprec-stats}
\end{table}

\section{Implementation Details for Section~\ref{sec:expr}}\label{sec:dlexp}
For VGG-16 and Pre-activation ResNet-164, we replicate the full precision baselines by using the publicly released implementation of SWA~\cite{SWA-repo}. Based on validation error, we discover that for low-precision training it is beneficial to decay the learning rate lower before SWA starts. Therefore, we follow the SGD learning rate schedule described in \citet{SWA} before SWALP starts. For ResNet-18 on ImageNet dataset, we follow the learning rate schedule described in \citet{resnet}.

We now disclose the specific hyperparameters.
\begin{itemize}
    \item For VGG-16 reported in \citet{SWA}, we use He's initialization \cite{he-init} and weight decay of \texttt{5e-4}. One budget is set to be $200$ epochs. For SGD-LP, we set the initial learning rate $\alpha_1=0.05$ for the first $0.5$ budget, and linearly decreased to $0.01 \alpha_1$ for $0.5-0.9$ budget, and keep the learning rate constant for $0.9-1.0$ budget. For SWALP, we use the same learning rate for the first $200$ epochs. We starts SWALP averaging at $200$th epoch and keep a constant learning rate $0.01$ with averaging frequency $c=1$ epoch. 
    \item For Preactivation ResNet-164, we use He's initialization \cite{he-init} and weight decay of \texttt{3e-4}. One budget is set to be $150$ epochs. For SGD, we use cycle length $C=1$ and set the initial learning rate $\alpha_1=0.1$ for the first $0.5$ budget, and linearly decreased to $0.01 \alpha_1$ for $0.5-0.9$ budget, and keep the learning rate constant for $0.9-1.0$ budget. For SWALP, we start averaging at $150$ epoch and keep a constant learning rate $0.01$ with averaging frequency $c=1$ epoch. Moreover, we found that possibly because low-precision training is less stable, it is better to average for less epochs. Therefore, we evaluate the full precision averaged model on a subset of the training set and report the performance of the model with lowest loss on this subset. 
    \item For ResNet-18 on ImageNet, we use weight decay \texttt{1e-4}. One budget is set to be 90 epochs. For both SGD and SGD-LP, we use initial learning rate $0.1$ and decay it by a factor of 10 every 30 epochs. For SWA and SWALP, we start averaging after 90 epochs with a constant learning rate of $0.001$ for SWA and $0.001$ for SWALP (obtained by a grid search). The averaging frequency is $c=1$ epochs. 
\end{itemize}
    
\clearpage
\section{Data for Figure~\ref{fig:diffc_qswa}}\label{sec:data-figure3}

\begin{table*}[h]
\label{table:figure3-data-diffc}

\caption{CIFAR100 classification error (\%) on test set. We use the same base model and average it with different frequency. Each column show how the SWA model perform on the test set during the training according to each averaging schedule. }
\centering
\begin{tabular}{@{}lllllllllllllll@{}}
\toprule
Epoch   & \multicolumn{2}{c}{1} & \multicolumn{2}{c}{5} & \multicolumn{2}{c}{10} & \multicolumn{2}{c}{50} & \multicolumn{2}{c}{80} & \multicolumn{2}{c}{90} & \multicolumn{2}{c}{100} \\
\#batch/avg & Error         & STD        & Error         & STD        & Error          & STD        & Error          & STD        & Error          & STD        & Error          & STD        & Error          & STD         \\ \midrule
1                 & 28.95       & 0.12       & 28.38       & 0.12       & 28.01        & 0.10       & 27.20        & 0.34       & 27.21        & 0.13       & 27.19        & 0.13       & 27.11        & 0.29        \\
2                 & 28.96       & 0.12       & 28.37       & 0.13       & 28.01        & 0.10       & 27.20        & 0.34       & 27.21        & 0.13       & 27.19        & 0.13       & 27.12        & 0.29        \\
20                & 28.96       & 0.24       & 28.32       & 0.12       & 28.02        & 0.04       & 27.20        & 0.33       & 27.21        & 0.11       & 27.20        & 0.15       & 27.14        & 0.28        \\
100               & 29.20       & 0.21       & 28.47       & 0.09       & 28.23        & 0.18       & 27.24        & 0.29       & 27.18        & 0.23       & 27.17        & 0.17       & 27.18        & 0.22        \\
200               & 29.63       & 0.31       & 28.74       & 0.15       & 28.25        & 0.14       & 27.27        & 0.29       & 27.23        & 0.21       & 27.17        & 0.14       & 27.20        & 0.20        \\
once/epoch                & 30.00       & 0.12       & 28.88       & 0.13       & 28.28        & 0.10       & 27.23        & 0.31       & 27.14        & 0.16       & 27.19        & 0.15       & 27.16        & 0.26        \\ \bottomrule
\end{tabular}
\end{table*}

\begin{table*}[h]
\label{table:figure3-data-qswa}
\caption{CIFAR100 classification error (\%) on test set. We use block floating point to quantize the average. Each column shows the result of using different number bits for the block floating point. }
\centering
\begin{tabular}{cccccccccc}
\toprule
\# of bits & Float            & 16               & 14               & 12               & 10               \\
Test Error & 26.65$\pm{0.29}$ & 26.65$\pm{0.29}$ & 26.66$\pm{0.29}$ & 26.65$\pm{0.29}$ & 26.60$\pm{0.33}$ \\
\midrule
\# of bits &  & 9                & 8                & 7                & 6 \\
Test Error &  & 26.67$\pm{0.28}$ & 26.85$\pm{0.38}$ & 27.48$\pm{0.38}$ & 29.48$\pm{0.25}$  \\
\bottomrule
\end{tabular}
\end{table*}

\end{appendices}

\end{document}